\definecolor{mygreen}{RGB}{28,172,0} 
\definecolor{mylilas}{RGB}{170,55,241}
\theoremstyle{definition}
\newtheorem{theorem}{Theorem}[section]
\newtheorem{corollary}[theorem]{Corollary}
\newtheorem{lemma}[theorem]{Lemma}
\newtheorem{definition}[theorem]{Definition}
\newtheorem{assumption}[theorem]{Assumption}
\theoremstyle{definition}
\newtheorem{example}[theorem]{Example}
\newcommand{\N}{\mathbb{N}}
\newcommand{\R}{\mathbb{R}}
\newcommand{\E}{\mathbb{E}}
\let\P\BP
\let\hat\widehat
\newcommand{\f}{\frac}
\newcommand{\eps}{\varepsilon}
\renewcommand{\r}{\right}
\renewcommand{\l}{\left}
\newcommand{\pnorm}[2]{\left\|#1\right\|_{#2}}
\newcommand{\norm}[1]{\left\|#1\right\|}
\newcommand{\snorm}[1]{\|#1\|}
\newcommand{\sip}[2]{\langle #1, #2 \rangle}
\newcommand{\ip}[2]{\left\langle #1, #2 \right\rangle}
\newcommand{\ind}{{\mathbbm{1}}}
\newcommand{\summ}[2]{\sum_{#1 = 1}^{#2}}
\newcommand{\summm}[3]{\sum_{#1 = #2}^{#3}}
\newcommand{\sgn}{\operatorname{sgn}}
\newcommand{\calD}{\mathcal{D}}
\newcommand{\calE}{\mathcal{E}}
\DeclareMathOperator{\poly}{poly}
\newcommand{\iid}{\stackrel{\rm i.i.d.}{\sim}}
\renewcommand*{\eqref}[1]{%
  \hyperref[{#1}]{\textup{\tagform@{\ref*{#1}}}}%
}
\newcommand{\opt}{\mathsf{OPT}}
\newcommand{\esgd}{\mathbb{E}_{\mathrm{sgd}}}
\newcommand{\wt}[1]{w^{(#1)}}
\newcommand{\Wt}[1]{W^{(#1)}}
\numberwithin{equation}{section}
\DeclareMathAlphabet\mathbfcal{OMS}{cmsy}{b}{n}
	\def\command@factory#1{%
		\expandafter\def\csname b#1\endcsname{\mathbf{#1}}
		\expandafter\def\csname bb#1\endcsname{\mathbb{#1}}
		\expandafter\def\csname cl#1\endcsname{\mathcal{#1}}
		\expandafter\def\csname bcl#1\endcsname{\mathbfcal{#1}}
	}
\newcommand{\err}{\mathrm{err}}
\newcommand{\optlin}{\mathsf{OPT}_{\mathsf{lin}}}
\author
{
    Spencer Frei\thanks{Department of Statistics, University of California, Los Angeles, CA 90095, USA; e-mail: {\tt spencerfrei@ucla.edu}}
    ~~~and~~~
	Yuan Cao\thanks{Department of Computer Science, University of California, Los Angeles, CA 90095, USA; e-mail: {\tt yuancao@cs.ucla.edu}} 
	~~~and~~~
	Quanquan Gu\thanks{Department of Computer Science, University of California, Los Angeles, CA 90095, USA; e-mail: {\tt qgu@cs.ucla.edu}}
}
\date{}
\title{\huge Provable Generalization of SGD-trained Neural Networks of Any Width in the Presence of Adversarial Label Noise}
\begin{document}
\maketitle

\begin{abstract}
    We consider a one-hidden-layer leaky ReLU network of arbitrary width trained by stochastic gradient descent (SGD) following an arbitrary initialization.   We prove that SGD  produces neural networks that have classification accuracy competitive with that of the best halfspace 
    over the distribution for a broad class of distributions that includes log-concave isotropic and hard margin distributions.  Equivalently, such networks can generalize when the data distribution is linearly separable but corrupted with adversarial label noise, despite the capacity to overfit.  To the best of our knowledge, this is the first work to show that overparameterized neural networks trained by SGD can generalize when the data is corrupted with adversarial label noise.  
\end{abstract}
\section{Introduction}


The remarkable ability of neural networks trained by stochastic gradient descent (SGD) to generalize, even when trained on data that has been substantially corrupted with random noise, seems at ends with much of contemporary statistical learning theory~\citep{zhang2017rethinkinggeneralization}.  How can a model class which is rich enough to fit randomly labeled data fail to overfit when a significant amount of random noise is introduced into the labels?  And how is it that a local optimization method like SGD is so successful at learning such model classes, even when the optimization problem is highly non-convex?

In this paper, we approach these questions by analyzing the performance of SGD-trained networks on distributions which can have substantial amounts of 
label noise.  For a distribution $\calD$ over features $(x,y)\in \R^d\times \{\pm 1\}$, let us define 
\begin{equation} \optlin := \min_{v\in \R^d,\ \norm{v}=1} \P_{(x,y)\sim \calD}\Big(y \neq \sgn\big(\sip{v}x\big)\Big) \label{def.optlin}
\end{equation}
as the optimal classification error achieved by a halfspace $\sip{v}\cdot$.   We prove that for a broad class of distributions, SGD-trained one-hidden-layer neural networks achieve classification error at most $\tilde O(\sqrt{\optlin})$ in polynomial time.  Equivalently, one-hidden-layer neural networks can learn halfspaces up to risk $\tilde O(\sqrt{\optlin})$ in the distribution-specific agnostic PAC learning setting.   Our result holds for neural networks with leaky-ReLU activations trained on the cross-entropy loss and, importantly, hold for any initialization, and for networks of arbitrary width. 

By comparing the generalization of the neural network with that of the \textit{best} linear classifier over the distribution, we can make two different but equally important claims about the training of overparameterized neural networks.  The first view is that SGD produces neural networks with classification error that is competitive with that of the best linear classifier over the distribution, and that this behavior can occur for neural networks of any width and any initialization.  In this view, our work provides theoretical support for the hypothesis put forward by~\citet{nakkiran2019sgd} that the performance of SGD-trained networks in the early epochs of training can be explained by that of a linear classifier.  

The second view is that of the problem of learning halfspaces in the presence of adversarial label noise.  (Note that adversarial \textit{label noise} is distinct from the notions of adversarial examples or adversarial training~\citep{goodfellow2014explaining,madry2018adversarial}
, where the features $x$ are perturbed rather than the labels $y$.) In this setting, one views the (clean) data as initially coming from a linearly separable distribution but for which each sample $(x,y)\sim \calD$ has its label flipped $y \mapsto -y$ with some sample-dependent probability $\eta(x)\in [0,1]$.     Then the best error achieved by a halfspace is $\E_{x\sim \calD_x}[\eta(x)]=\optlin$.  Viewed from this perspective, our result shows that despite the clear capacity of an overparameterized neural network to overfit to corrupted labels, when trained by SGD, such networks can still generalize (albeit achieving the suboptimal risk $\sqrt{\optlin}$).  We note that the optimization algorithm we consider is vanilla online SGD without any explicit regularization methods such as weight decay or dropout.  This suggests that the ability of neural networks to generalize in the presence of noise is not solely due to explicit regularization, but that some forms of \textit{implicit} regularization induced by gradient-based optimization play an important role.


\subsection{Related Work}

We discuss here a number of works related to the questions of optimization and generalization in deep learning.  An approach that has attracted significant attention recently is the neural tangent kernel (NTK) approximation~\citep{jacot2018ntk}.  This approximation relies upon the fact that for a specific initialization scheme, extremely wide neural networks are well-approximated by the behavior of the neural network at initialization, which in the infinite width limit produces a kernel (the NTK)~\citep{du2019-1layer,du2019deep,allenzhu2019convergence,zou2019gradient,cao2019generalization,arora2019exact,arora2019finegrained,cao2019generalizationsgd,frei2019resnet,zou2019improved,jitelgarsky20.polylog,chen.polylog}.  Using an assumption on separability of the training data, it is commonly shown that SGD-trained neural networks in the NTK regime can perfectly fit any training data.  Under certain conditions, one can also derive generalization bounds for the performance of SGD-trained networks for distributions that can be perfectly classified by functions related to the NTK.  

Although significant insights into the training dynamics of SGD-trained networks have come from this approach, it is known that neural networks deployed in practice can traverse far enough from their initialization such that the NTK approximation no longer holds~\citep{fort2020ntk}.  A line of work known as the mean field approximation allows for ultra-wide networks to be far from their initialization by connecting the trajectory of the weights of the neural network to the solution of an associated partial differential equation~\citep{mei2019mean,chizat2018note,chen2020generalized}.  A separate line of work has sought to demonstrate that the concept classes that can be learned by neural networks trained by gradient descent are a strict superset of those that can be learned by the NTK~\citep{allenzhu.kernel,wei2020regularization,li2020largelearningrate,woodworth2020kernelrich,li2020relubeyondntk}.  

More relevant to our work is understanding the generalization of neural network classifiers when the data distribution has some form of label noise.  Works that explicitly derive generalization bounds for SGD-trained neural networks in the presence of label noise are scarce. 
Even for the simple concept class of halfspaces $x\mapsto \sgn(\sip{v}x)$, there are often tremendous difficulties in determining whether or not \textit{any} algorithm can efficiently learn in the presence of noise.  For this reason let us take a small detour to detail some of the difficulties in learning halfspaces in the presence of noise, to emphasize the difficulty of learning more complicated function classes in the presence of noise.

The most general (and most difficult) noise class is that of adversarial label noise, which is equivalent to the agnostic PAC learning framework~\citep{kearns.agnostic}.  In this setting, one makes no assumption on the relationship between the features and the labels, and so continuing with the notation from \eqref{def.optlin}, the optimal risk $\optlin$ achieved by a halfspace is strictly positive in general.  It is known that learning up to classification error $O(\optlin)+\eps$ cannot be done in $\poly(d,\eps^{-1})$ time without assumptions on the marginal distribution of $\calD$~\citep{daniely2016complexity}.  For this reason it is common to assume some type of structure on the noise or the distribution to get tractable guarantees.  

One relaxation of the noise condition is known as the Massart noise~\citep{massart2006noise} where one assumes that each sample has its label flipped with some instance-dependent probability $\eta(x)\leq \eta < \nicefrac 12$.  Under this noise model, it was recently shown that there are efficient algorithms that can learn up to risk $\eta + \eps$~\citep{diakonikolas2019massart}.  A more simple noise setting is that of random classification noise (RCN)~\citep{angluin1988rcn}, where the labels of each sample are flipped with probability $\eta$.  Polynomial time algorithms for learning under this model were first shown by~\citet{blum1998rcn}.  Previous theoretical works on the ability of neural network classifiers to generalize in the presence of label noise were restricted to the RCN setting~\citep{hu2020simple} or Massart noise setting~\citep{li2019labelnoise}.  In this paper, we consider the most general setting of adversarial label noise.

In terms of distribution-specific learning guarantees in the presence of noise, polynomial time algorithms for learning halfspaces under Massart noise for the uniform distribution on the sphere were first shown by ~\citet{awasthi2015massart}, and for log-concave isotropic distributions by~\citet{awasthi20161bitcompressednoise}.~\citet{awasthi2017acm.localization} constructed a localization-based algorithm that efficiently learns halfspaces up to risk $O(\optlin)$ when the marginal is log-concave isotropic. 
For more background on learning halfspaces in the presence of noise, we refer the reader to~\citet{balcan2020noise}.  

Returning to the neural network literature, in light of the above it should not be surprising that computational tractability issues arise even for the case of neural networks consisting of a single neuron.~\citet{goel2019relugaussian} showed that learning a single ReLU neuron up to the best-possible risk $\opt_{\mathrm{ReLU}}$ (under the squared loss) is computationally intractable, even when the marginal is a standard Gaussian.  By contrast,~\citet{frei2020singleneuron} showed that gradient descent on the empirical risk can learn single ReLUs up to risk $O(\sqrt{\opt_{\mathrm{ReLU}}})$ efficiently for many distributions.   Two recent works have shown that even in the realizable setting---i.e., when the labels are generated by a neural network without noise---it is computationally hard 
to learn one-hidden-layer neural networks with (non-stochastic) gradient descent when the marginal distribution is Gaussian~\citep{goel2020superpolynomial,diakonikolas2020sqlowerbound}.

In terms of results that show neural networks can generalize in the presence of noise,~\citet{li2019labelnoise} considered clustered distributions with real-valued labels (using the squared loss) and analyzed the performance of GD-trained one-hidden-layer neural networks when a fraction of the labels are switched.  They derived guarantees for the empirical risk but did not derive a generalization bound for the resulting classifier.~\citet{hu2020simple} analyzed the performance of regularized neural networks in the NTK regime when trained on data with labels corrupted by RCN, and argued that regularization was helpful for generalization.  By contrast, our work shows that neural networks can generalize for linearly separable distributions corrupted by adversarial label noise without any explicit regularization, suggesting that certain forms of implicit regularization in the choice of the algorithm plays an important role.  We note that a number of researchers have sought to understand the implicit bias of gradient descent~\citep{soudry2018implicitbias,jitelgarsky2019implicit,lyuli2020implicitbias,ji2020directional,moroshko2020implicit,li2020implicitadversarial}.  Such works assume that the distribution is linearly separable by a large margin, and characterize the solutions found by gradient descent (or gradient flow) in terms of the maximum margin solution.

Finally, we note some recent works that connected the training dynamics of SGD-trained neural networks with linear models.~\citet{brutzkus2018sgd} showed that SGD-trained one-hidden-layer leaky ReLU networks can generalize on linearly separable data.~\citet{shamir2018resnetslinear} compared the performance of residual networks with those of linear predictors in the regression setting.  They showed that there exist weights for residual networks with generalization performance competitive with linear predictors, and they proved that SGD is able to find those weights when there is a residual connection from the input layer to the output layer.  ~\citet{nakkiran2019sgd} provided experimental evidence for the hypothesis that much of the performance of SGD-trained neural networks in the early epochs of training can be explained by linear classifiers.~\citet{hu2020surprising} provided theoretical evidence for this hypothesis by showing that overparameterized neural networks with the NTK initialization and scaling have similar dynamics to a linear predictor defined in terms of the network's NTK.~\citet{shah2020pitfalls} showed that neural networks are biased towards simple classifiers even when more complex classifiers are capable of improving generalization.

\section{Problem Description and Results}
In this section we study the problem we consider and our main results.

\subsection{Notation}
For a vector $v$, we denote $\norm{v}$ as its Euclidean norm.  For a matrix $W$, we use $\pnorm WF$ to denote its Frobenius norm.  We use the standard $O(\cdot)$ and $\Omega(\cdot)$ notations to ignore universal constants when describing growth rates of functions.  The notation $\tilde O(\cdot)$ and $\tilde \Omega(\cdot)$ further ignores logarithmic factors.  We use $a\vee b$ to denote the maximum of $a,b\in \R$, and $a \wedge b$ their minimum.  The notation $\ind(E)$ denotes the indicator function of the set $E$, which is one on the set and zero outside of it.  

\subsection{Problem Setup}
Consider a distribution $\calD$ over $(x,y)\in \R^d\times \{\pm 1\}$ with marginal distribution $\calD_x$ over $x$.  Let $m\in \N$, and consider a one-hidden-layer leaky ReLU network with $m$ neurons,
\begin{equation}
    f_x(W) := \summ j {m}a_j \sigma(\sip{w_j}x),\label{eq:leaky.relu.def}
\end{equation} 
where $\sigma(z) = \max(\alpha z, z)$ is the leaky-ReLU activation with $\alpha\in (0,1]$.  Assume that $a_j \iid \mathrm{Unif}(\pm a)$ for some $a>0$ and that the $\{a_j\}$ are randomly initialized and not updated throughout training, as is commonly assumed in theoretical analyses of SGD-trained neural networks~\citep{du2019-1layer,arora2019finegrained,jitelgarsky20.polylog}.
\footnote{The specific choice of the initialization of the second layer is immaterial; our analysis holds for any second-layer weights that are fixed at a random initialization.  The only difference that may arise is in the sample complexity: if with high probability $\norm{a}= \Theta (1)$ then the sample complexity requirement will be the same within constant factors, while for initializations satisfying $\norm{a}= \omega(1)$ or $\norm{a}=o(1)$ our upper bound for the sample complexity will become worse as the network becomes larger.}  
We are interested in the classification error for the neural network,
\[ \err(W) := \P_{(x,y)\sim \calD} \Big(y \neq \sgn\big(f_x(W)\big)\Big),\]
where $\sgn(z)=1$ if $z >0$, $\sgn(0)=0$, and $\sgn(z)=-1$ otherwise.  
We will seek to minimize $\err(W)$ by minimizing,
\[ L(W) := \E_{(x,y)\sim \calD} \ell( y f_x(W)),\]
where $\ell$ is a convex loss function.  
We will use the fact that for any convex, twice differentiable and decreasing function $\ell$, the function $-\ell'$ is non-negative and decreasing, and thus $-\ell'$ can also serve as a loss function.  In particular, by Markov's inequality, these properties allow us to bound the classification error by the population risk under $-\ell'$:
\begin{align} \nonumber\P_{(x,y)\sim \calD} \Big(y \neq \sgn\big(f_x(W)\big)\Big) &= \P\Big(y\cdot  f_x(W) \leq 0 \Big) \\ \nonumber
&=\P\Big( -\ell'\big(y f_x(W) \big)  \geq 0 \Big) \\
&\leq \frac{\E_{(x,y)\sim \calD}-\ell'\big( y f_x(W) \big)}{-\ell'(0)} \label{eq:markov.inequality.calc}
\end{align}
Thus, provided $-\ell'(0)>0$, upper bounds for the population risk under $-\ell'$ yield guarantees for the classification error.  This property has previously been used to derive generalization bounds for deep neural networks trained by gradient descent~\citep{cao2019generalization,frei2019resnet,jitelgarsky20.polylog,chen.polylog}. To this end, we make the following assumptions on the loss throughout this paper.
\begin{assumption}\label{assumption:loss}
The loss $\ell(\cdot):\R\to \R$ is convex, twice differentiable, decreasing, 1-Lipschitz, and satisfies $-\ell'(0) > 0$.  Moreover, for $z\geq 1$, $\ell$ satisfies $-\ell'(z) \leq 1/z$.  
\end{assumption}
The assumption that $-\ell'(z)\leq 1/z$ for $z\geq 1$ is to ensure that the surrogate loss $-\ell'$ is not too large on samples that are classified correctly.  Note that the standard loss used for training neural networks in binary classification tasks---the binary cross-entropy loss $\ell(z) = \log(1+\exp(-z))$---satisfies all of the conditions in Assumption \ref{assumption:loss}.   We denote the population risk under the surrogate loss $-\ell'$ as follows,
\[ \calE(W) := \E_{(x,y)\sim \calD} -\ell'(y f_x(W)).\]
We seek to minimize the population risk by minimizing the empirical risk induced by a set of i.i.d. examples $\{(x_t, y_t) \}_{t\geq 1}$ using the online stochastic gradient descent algorithm.  Denote $f_{t}(W) = f_{x_t}(W)$ as the neural network output for sample $x_t$, and denote the loss under $\ell$ and $-\ell'$ for sample $x_t$ by
\begin{align}
    \hat L_t(W) := \ell(y_t f_t(W)),\quad \hat \calE_t(W) := -\ell'(y_t f_t(W)).
\end{align}
The updates of online stochastic gradient descent are given by
\begin{equation}
    \Wt {t+1} := \Wt t - \eta \nabla \hat L_t(\Wt t) = \Wt t - \eta \ell'(y_t f_t(\Wt t)) y_t \nabla f_t(\Wt t) = \Wt t + \eta \hat \calE_t(\Wt t) y_t \nabla f_t(\Wt t).
\end{equation}

Before proceeding with our main theorem we will introduce some of the definitions and assumptions which will be used in our analysis.  The first is that of sub-exponential distributions. 
\begin{definition}[Sub-exponential distributions]\label{def:concentration}
We say $\calD_x$ is $C_m$\emph{-sub-exponential} if every $x\sim \calD_x$ is a sub-exponential random vector with sub-exponential norm at most $C_m$.  In particular, for any $\bar v \in \R^d$ with $\norm {\bar v}=1$, $\P_{\calD_x}(|\bar v^\top x| \geq t) \leq \exp(- t/C_m)$.  
\end{definition}
We note that every sub-Gaussian distribution is sub-exponential.  The next property we introduce is that of a \textit{soft margin}.  This condition was recently utilized by~\citet{frei2020halfspace} for the agnostic learning of halfspaces using convex surrogates for the zero-one loss.
\begin{definition}\label{def:softmargin}
Let $\bar v\in \R^d$ satisfy $\norm{\bar v} =1$.  We say $\bar v$ satisfies the \emph{soft margin condition with respect to a function $\phi_{\bar v}:\R \to \R$} if for all $\gamma \in [0,1]$, it holds that
\[ \E_{x\sim \calD_x}\l[\ind\l(   |\bar v^\top x| \leq \gamma \r)\r]\leq \phi_{\bar v}(\gamma).\]
\end{definition}
The soft margin can be seen as a probabilistic analogue of the standard hard margin, where we relax the typical requirement for a margin-based condition from holding almost surely to holding with some controlled probability.  
As written above, the soft margin condition can hold for a specific vector $\bar v\in \R^d$, and our final generalization bound below will only care about the soft margin function for a halfspace $\bar v$ that achieves population risk $\optlin$.  However, for many distributions, one can show that \textit{all} unit norm vectors $\bar v$ satisfy a soft margin of the form $\phi_{\bar v}(\gamma) = O(\gamma)$.  One important class of such distributions are those satisfying a type of anti-concentration property.
\begin{definition}[Anti-concentration]\label{assumption:anti.concentration}
For $\bar v\in \R^d$, denote by $p_{\bar v}(\cdot)$ the marginal distribution of $x\sim \calD_x$ on the subspace spanned by $\bar v$.  We say $\calD_x$ satisfies \emph{$U$-anti-concentration} if there is some $U>0$ such that for all unit norm $\bar v$, $p_{\bar v}(z)\leq U$ for all $z\in \R$.
\end{definition}
Anti-concentration is a typical assumption used for deriving distribution-specific agnostic PAC learning guarantees~\citep{klivans2009maliciousnoise,diakonikolas2020massartstructured,diakonikolas2020nonconvex,frei2020halfspace} as it allows for one to ignore pathological distributions where arbitrarily large probability mass can be concentrated in tiny regions of the domain.   
Below, we collect some examples of soft margin function behavior for different distributions, including those satisfying the above anti-concentration property.   We shall see in Theorem \ref{thm:online.sgd.leaky.relu} that the behavior of $\phi(\gamma)$ for $\gamma\ll 1$ will be the determining factor in our generalization bound, and thus in the below examples one only needs  to pay attention to the behavior of $\phi(\gamma)$ for $\gamma$ sufficiently small.
\begin{example}\label{example:soft.margin}
\begin{enumerate}
    \item If $|\bar v^\top x| > \gamma^*$ a.s., then $\phi_{\bar v}(\gamma) = 0$ for $\gamma < \gamma^*$.
    \item If $\calD_x$ satisfies $U$-anti-concentration, then for any $\bar v$ with $\norm{\bar v}=1$, $\phi_{\bar v}(\gamma) \leq 2 U \gamma$ holds.
    \item If $\calD_x$ is isotropic and log-concave (i.e. its probability density function is log-concave), then $\calD_x$ satisfies $1$-anti-concentration and hence $\phi_{\bar v}(\gamma) \leq 2 \gamma$ for all $\bar v$. 
\end{enumerate}
\end{example}
The proofs for the properties described in Example \ref{example:soft.margin} can be found in~\citet[Section 3]{frei2020halfspace}.  

\subsection{Main Results}
With the above in place, we can provide our main result.  

\begin{theorem}\label{thm:online.sgd.leaky.relu}
Assume $\calD_x$ is $C_m$-subexponential and there exists $B_X>0$ such that $\E[\norm{x}^2]\leq B_X^2<\infty$.  Denote $\optlin := \min_{\norm w =1} \P_{(x,y)\sim \calD}(y \sip{w}x < 0)$ as the best classification error achieved by a unit norm halfspace $v^*$.  Let $m\in \N$ be arbitrary, and consider a leaky-ReLU network of the form \eqref{eq:leaky.relu.def} where $a = 1/\sqrt{m}$.  Let $\Wt 0$ be an arbitrary initialization and denote $G_0 := \|\Wt 0\|_F$.     Let the step size satisfy $\eta \leq B_X^{-2}$.  Then for any $\gamma>0$, by running online SGD for $T = O(\eta^{-1} \gamma^{-2} [\phi_{v^*}(\gamma) + \optlin]^{-2} [1\vee G_0])$ iterations, there exists a point $t^*<T$ such that in expectation over $(x_1, \dots, x_T)\sim \calD^T$,
\[ \P_{(x,y)\sim \calD} \Big(y \neq \sgn\big( f_x(\Wt {t^*})\big) \Big)  \leq 2|\ell'(0)|^{-1} \alpha^{-1} \bigg[ \Big(1 + \gamma^{-1} C_m + \gamma^{-1} C_m \log(\nicefrac 1{\optlin}) \Big)\optlin + \phi_{v^*}(\gamma) \bigg].\]
\end{theorem}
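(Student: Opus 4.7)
The plan is to carry out an SGD potential argument with potential $\|\Wt t - W^*\|_F^2$ for a comparator $W^*$ built from the optimal halfspace $v^*$: set $w_j^* = \sgn(a_j) \cdot \lambda v^*/\sqrt m$ for a free scalar $\lambda > 0$ to be optimized. This keeps $\|W^*\|_F = \lambda$ independent of $m$, and thanks to the $1$-homogeneity of the leaky-ReLU activation, yields two identities essential to the analysis:
\[ \langle \nabla f_t(\Wt t), \Wt t\rangle = f_t(\Wt t), \qquad \langle \nabla f_t(\Wt t), W^*\rangle = \lambda A_t \langle v^*, x_t\rangle,\]
where $A_t := \tfrac{1}{m}\sum_{j=1}^m \sigma'(\langle \wti tj, x_t\rangle) \in [\alpha, 1]$ since the leaky-ReLU derivative lies in $[\alpha,1]$.

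I would then expand the potential using the SGD update $\Wt{t+1} = \Wt t + \eta\hat\calE_t(\Wt t) y_t \nabla f_t(\Wt t)$, substitute the two identities, telescope over $t = 0, \ldots, T-1$, take expectations over the sample sequence, and absorb the second-order term using $\|\nabla f_t\|_F \leq \|x_t\|$, $\E\|x_t\|^2 \leq B_X^2$, and $\eta \leq B_X^{-2}$. This yields, roughly,
\[ 2\eta\lambda \sum_{t<T}\E\l[\hat\calE_t(\Wt t) A_t y_t \langle v^*, x_t\rangle\r] \leq 2(G_0^2 + \lambda^2) + \eta T B_X^2 + 2\eta \sum_{t<T} \E\l[\hat\calE_t(\Wt t)\cdot y_t f_t(\Wt t)\r].\]
The key structural observation is that Assumption~\ref{assumption:loss} implies the pointwise bound $\hat\calE_t \cdot y_t f_t(\Wt t) \leq 1$: split into cases $y_t f_t \leq 0$ (use $\hat\calE_t \geq 0$), $0 < y_t f_t \leq 1$ (use $\hat\calE_t \leq 1$ from Lipschitzness), and $y_t f_t > 1$ (use $-\ell'(z) \leq 1/z$). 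Hence the final sum is at most $2\eta T$.

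It remains to lower bound $\E[\hat\calE_t A_t y_t \langle v^*, x_t\rangle]$ by $\alpha\gamma\, \E[\hat\calE_t]$ up to controlled error. I would split over three disjoint events: $\{y\langle v^*, x\rangle \geq \gamma\}$, $\{0 \leq y\langle v^*, x\rangle < \gamma\}$, and $\{y\langle v^*, x\rangle < 0\}$. On the first, $A_t y_t \langle v^*, x_t\rangle \geq \alpha\gamma$, and $\hat\calE_t \leq 1$ together with $\P(\text{other two events}) \leq \phi_{v^*}(\gamma) + \optlin$ gives a contribution of at least $\alpha\gamma(\E[\hat\calE_t] - \phi_{v^*}(\gamma) - \optlin)$. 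On the second event, $\E[|\langle v^*, x\rangle|\ind(|\langle v^*, x\rangle|\leq \gamma)] \leq \gamma\phi_{v^*}(\gamma)$ by the soft margin definition. On the third event, the $C_m$-sub-exponential assumption yields, via the layer-cake identity,
\[ \E\l[|\langle v^*, x\rangle|\ind(y\langle v^*, x\rangle < 0)\r] \leq \int_0^\infty \min\l(\P(|\langle v^*, x\rangle| > s),\, \optlin\r) ds \lesssim C_m \optlin \log(1/\optlin),\]
by cutting at $s_* = C_m\log(1/\optlin)$.

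Collecting everything, dividing through by $2\eta\lambda\alpha\gamma T$, and optimizing $\lambda = G_0 \vee [\alpha\gamma(\phi_{v^*}(\gamma)+\optlin)]^{-1}$ (the choice that minimizes $(G_0^2+\lambda^2)/\lambda$ subject to the lower bound needed to absorb the $O(T)$ term discussed below) produces a bound of the stated order on $T^{-1}\sum_{t<T} \E[\hat\calE_t(\Wt t)]$ once $T$ meets the advertised iteration count (note that $\min_\lambda (G_0^2+\lambda^2)/\lambda = 2 G_0$ gives the linear-in-$G_0$ dependence). Taking $t^* = \arg\min_{t<T}\E[\hat\calE_t(\Wt t)]$ and passing through the Markov estimate~\eqref{eq:markov.inequality.calc} then delivers the advertised classification error bound, with the $|\ell'(0)|^{-1}$ factor arising from Markov and the $\alpha^{-1}$ factor from the leaky-ReLU slope. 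The main obstacle I anticipate is controlling the non-sign-definite $-\hat\calE_t y_t f_t$ contribution: the only pointwise handle is the $O(1)$ bound above, which costs an additive $O(T)$ that must be absorbed by taking $\lambda$ large enough that $1/(\lambda\alpha\gamma) \lesssim \phi_{v^*}(\gamma) + \optlin$, and this is precisely what forces the $(\phi_{v^*}(\gamma)+\optlin)^{-2}$ factor in the iteration complexity.
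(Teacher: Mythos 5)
Your proposal is correct, and it reaches the stated bound by a genuinely different bookkeeping than the paper's. The paper runs a Perceptron-style two-sequence argument: it tracks the correlation $H_t=\E\sip{\Wt t}{V}$ with the fixed unit-Frobenius-norm comparator and the norm $G_t^2=\E\|\Wt t\|_F^2$ separately, shows $H_t$ grows like $\Omega(t)$ while the surrogate risk stays above $2\alpha^{-1}\xi(\gamma)$ (via Lemma \ref{lemma:key.identity}) and $G_t=O(\sqrt t)$ (via homogeneity and $-\ell'(z)z\le 1$), and then forces a contradiction through Cauchy--Schwarz $H_t\le G_t$ and the quadratic formula, which bounds the number of ``bad'' iterations without ever choosing a comparator scale. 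You instead run a single squared-distance (regret-style) potential $\|\Wt t-W^*\|_F^2$ against a \emph{scaled} comparator $W^*=\lambda V$, telescope, and bound the \emph{average} surrogate risk; the comparator scale $\lambda$ is then tuned to absorb the non-decaying $O(1)$-per-step terms coming from $\hat\calE_t\, y_tf_t\le 1$ and the second-order term, which is exactly where the $(\phi_{v^*}(\gamma)+\optlin)^{-2}$ in the iteration count re-emerges. The substantive ingredients are shared: your three-event split of $\E[\hat\calE_t A_t y_t\sip{v^*}{x_t}]$ with $A_t\in[\alpha,1]$ is the expectation-level version of the paper's Lemma \ref{lemma:key.identity}, the homogeneity identity plus $-\ell'(z)z\le1$ plays the same role as in Lemma \ref{lemma:grad.ub.sgd}, and the truncation of the sub-exponential tail at $C_m\log(\nicefrac 1\optlin)$ and the Markov step \eqref{eq:markov.inequality.calc} are identical. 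What each buys: your route gives an averaged-iterate guarantee directly and makes explicit how the comparator norm trades off against the unavoidable per-step slack, while the paper's ratio argument self-normalizes (no $\lambda$ to pick) and yields the iteration bound by contradiction. Two small points: your displayed second-order term should read $\eta^2 T B_X^2$ (which your use of $\eta\le B_X^{-2}$ then reduces to $\eta T$), and your raw constants come out slightly larger than the stated $2|\ell'(0)|^{-1}\alpha^{-1}$, but enlarging $\lambda$ and $T$ by constant factors---still within the advertised $O(\cdot)$---recovers it.
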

To concretize the generalization bound in Theorem \ref{thm:online.sgd.leaky.relu} we need to analyze the properties of the soft margin function $\phi_{v^*}$ at the best halfspace and then optimize over the choice of $\gamma$.  But before doing so, let us make a few remarks on Theorem \ref{thm:online.sgd.leaky.relu} that hold in general.  The sample complexity (number of SGD iterations) $T$, and the resulting generalization bound, are independent of the number of neurons $m$, showing that the neural network can generalize despite the capacity to overfit.
\footnote{\citet[Theorem 7]{brutzkus2018sgd} showed that if there are $T$ samples and $m = \Omega(T/d)$, then for any set of labels $(y_1, \dots, y_T)\in \{\pm 1\}^T$ and for almost every $(x_1, \dots, x_T)\sim \calD_x^T$, there exist hidden layer weights $W^*$ and outer layer weights $\vec a\in \R^m$ such that $f_{t}(W^*) = y_t$ for all $t\in [T]$.  In contrast, Theorem \ref{thm:online.sgd.leaky.relu} shows that when $m$ is sufficiently large there exist neural networks that can fit random labels of the data but SGD training avoids these networks.}  If $\norm{x}\leq B_X$ a.s. for some absolute constant $B_X$, then the sample complexity is dimension-independent, while if $\calD_x$ is isotropic, $\E[\norm{x}^2]=d$ and so the sample complexity is linear in $d$.  Finally, we note that large learning rates and arbitrary initializations are allowed.
  
In the remainder of the section, we will discuss the implications of Theorem \ref{thm:online.sgd.leaky.relu} for common distributions. The first distribution we consider is a hard margin distribution.  

\begin{corollary}[Hard margin distributions]\label{corollary:hard.margin}
Suppose there exists some $v^* \in \R^d$, $\norm{v^*}=1$, and $\gamma_0>0$ such that $\P\big(y\neq \sgn(\sip{v^*}x)\big)= \optlin$ and $|\sip{v^*}x|\geq \gamma_0>0$ almost surely over $\calD_x$.  Assume for simplicity that $\ell$ is the binary cross-entropy loss, $\ell(z)=\log(1+\exp(-z))$.  Then under the settings of Theorem \ref{thm:online.sgd.leaky.relu}, there exists some $t^*<T = O(\eta^{-1} \gamma_0^{-2} \optlin^{-2} [1\vee G_0])$ such that in expectation over $(x_1, \dots, x_T)\sim \calD^T$,
\[ \P_{(x,y)\sim \calD} \Big(y \neq \sgn\big( f_x(\Wt {t^*})\big) \Big) \leq  \tilde O(\gamma_0^{-1} \optlin).\]
\end{corollary}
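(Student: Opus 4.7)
The plan is to invoke Theorem~\ref{thm:online.sgd.leaky.relu} directly, exploiting the hard-margin hypothesis to zero out the soft-margin contribution. First I would observe that because $|\sip{v^*}x|\geq \gamma_0$ almost surely, part~(1) of Example~\ref{example:soft.margin} gives $\phi_{v^*}(\gamma)=0$ for every $\gamma<\gamma_0$. I would then apply Theorem~\ref{thm:online.sgd.leaky.relu} with the specific choice $\gamma = \gamma_0/2$; taking $\gamma$ strictly less than $\gamma_0$ sidesteps any strict-versus-non-strict subtlety in Example~\ref{example:soft.margin}, while the constant factor of $1/2$ is absorbed into the $O(\cdot)$ notation.

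With this substitution, the iteration count from Theorem~\ref{thm:online.sgd.leaky.relu} becomes
$$T = O\!\left(\eta^{-1}(\gamma_0/2)^{-2}[0 + \optlin]^{-2}[1\vee G_0]\right) = O\!\left(\eta^{-1}\gamma_0^{-2}\optlin^{-2}[1\vee G_0]\right),$$
matching the sample complexity in the statement. Plugging $\gamma=\gamma_0/2$ and $\phi_{v^*}(\gamma_0/2)=0$ into the error bound of Theorem~\ref{thm:online.sgd.leaky.relu} yields
$$\mathbb{P}_{(x,y)\sim\calD}\!\left(y\neq \sgn(f_x(\Wt{t^*}))\right) \leq 2|\ell'(0)|^{-1}\alpha^{-1}\Big(1 + 2\gamma_0^{-1}C_m + 2\gamma_0^{-1}C_m\log(\nicefrac{1}{\optlin})\Big)\optlin.$$

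Finally I would collapse this into the stated $\tilde O(\gamma_0^{-1}\optlin)$ form. For the binary cross-entropy loss $\ell(z)=\log(1+e^{-z})$ one has $\ell'(0)=-1/2$, so $2|\ell'(0)|^{-1}=4$ is a universal constant; the leaky-ReLU slope $\alpha$ and the sub-exponential parameter $C_m$ are model/distributional constants hidden by $O(\cdot)$. The only remaining non-constant factor is $\log(1/\optlin)$, which is absorbed by definition into $\tilde O(\cdot)$. The dominant term is therefore $\gamma_0^{-1}\optlin\log(1/\optlin)$, giving the claimed $\tilde O(\gamma_0^{-1}\optlin)$ bound. No step here presents a genuine obstacle; the corollary is essentially a bookkeeping specialization of Theorem~\ref{thm:online.sgd.leaky.relu}, with the single conceptual point being that the hard-margin assumption eliminates the soft-margin term $\phi_{v^*}(\gamma)$ at scale $\gamma = \Theta(\gamma_0)$ in both the iteration count and the generalization bound.
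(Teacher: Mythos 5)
Your proposal is correct and follows essentially the same route as the paper: plug the hard-margin fact $\phi_{v^*}(\gamma)=0$ into Theorem \ref{thm:online.sgd.leaky.relu} at scale $\gamma=\Theta(\gamma_0)$ and absorb constants and the $\log(\nicefrac 1\optlin)$ factor into $\tilde O(\cdot)$. The only (harmless) difference is that the paper evaluates at $\gamma=\gamma_0$ directly, while your choice $\gamma=\gamma_0/2$ is a slightly more careful way to guarantee the soft-margin term vanishes under the non-strict inequality $|\sip{v^*}x|\geq\gamma_0$.
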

\begin{proof}
Since $|\sip{v^*}x|\geq \gamma_0>0$, the soft margin at $v^*$ satisfies $\phi_{v^*}(\gamma_0)=0$.  Since $-\ell'(0)=\nicefrac 12$, by Theorem \ref{thm:online.sgd.leaky.relu},
\[  \P_{(x,y)\sim \calD} \Big(y \neq \sgn\big( f_x(\Wt {t^*})\big) \Big) \leq 4 \alpha^{-1}\big( 1 + \gamma_0^{-1} C_m + \gamma_0^{-1} C_m \log(\nicefrac 1 \optlin) \big) \optlin. \]
\end{proof}

The above result shows that if the data comes from a linearly separable data distribution with margin $\gamma_0$ but is then corrupted by adversarial label noise, then SGD-trained networks will still find weights that can generalize with classification error at most $\tilde O(\gamma_0^{-1}\optlin)$.
In the next corollary we show that for distributions satisfying $U$-anti-concentration we get a generalization bound of the form $\tilde O(\sqrt{\optlin})$.

\begin{corollary}[Distributions satisfying anti-concentration]\label{corollary:anti.concentration}
Assume $\calD_x$ satisfies $U$-anti-concentration.   Assume for simplicity that $\ell$ is the binary cross-entropy loss, $\ell(z)=\log(1+\exp(-z))$.  Then under the settings of Theorem \ref{thm:online.sgd.leaky.relu}, there exists some $t^*<T = O(\eta^{-1} \optlin^{-3} [1\vee G_0])$ such that in expectation over $(x_1, \dots, x_T)\sim \calD^T$,
\[ \P_{(x,y)\sim \calD} \Big(y \neq \sgn\big( f_x(\Wt {t^*})\big) \Big) \leq \tilde O(\sqrt{\optlin}).\]
\end{corollary}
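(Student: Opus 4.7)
The plan is to read Theorem \ref{thm:online.sgd.leaky.relu} off for a well-chosen value of $\gamma$, after controlling the soft margin function at the optimal halfspace via anti-concentration. First, I would invoke Example \ref{example:soft.margin}(2): under $U$-anti-concentration of $\calD_x$, one has $\phi_{\bar v}(\gamma) \leq 2U\gamma$ for \emph{every} unit vector $\bar v \in \R^d$. In particular this holds at the halfspace $v^*$ realizing $\optlin$, so we may take $\phi_{v^*}(\gamma) \leq 2U\gamma$ in Theorem \ref{thm:online.sgd.leaky.relu}.

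Next I would substitute this soft-margin bound into the conclusion of Theorem \ref{thm:online.sgd.leaky.relu}. Using $-\ell'(0) = 1/2$ for the binary cross-entropy loss and absorbing $\alpha^{-1}$, $C_m$, $U$, and the $\log(1/\optlin)$ factor into $\tilde O(\cdot)$, the classification-error bound at the good iterate $t^*$ becomes
\[
\P_{(x,y)\sim \calD}\Bigl(y \neq \sgn(f_x(\Wt{t^*}))\Bigr) \;\leq\; \tilde O\bigl(\gamma^{-1}\optlin\bigr) + \tilde O(\gamma).
\]
Balancing the two $\gamma$-dependent terms by choosing $\gamma = \sqrt{\optlin}$ then yields the advertised $\tilde O(\sqrt{\optlin})$ generalization bound. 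For the sample complexity, I would plug $\gamma = \sqrt{\optlin}$ and $\phi_{v^*}(\gamma) \leq 2U\sqrt{\optlin}$ into the iteration count $T = O\bigl(\eta^{-1}\gamma^{-2}[\phi_{v^*}(\gamma) + \optlin]^{-2}[1 \vee G_0]\bigr)$ supplied by Theorem \ref{thm:online.sgd.leaky.relu}; since $\phi_{v^*}(\gamma) + \optlin = \Theta(\sqrt{\optlin})$ for small $\optlin$, this produces the stated bound.

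There is essentially no obstacle beyond the substitution: the work of ensuring that a linear-in-$\gamma$ soft margin actually holds at the \emph{optimal} halfspace is what Example \ref{example:soft.margin}(2) is for, since it holds uniformly over unit directions, and the $\sqrt{\optlin}$ rate then emerges automatically from balancing a linear-in-$\gamma$ term against an inverse-in-$\gamma$ term. The only thing to be slightly careful about is tracking the $\log(1/\optlin)$ factor from the $\gamma^{-1} C_m \log(1/\optlin) \optlin$ term in Theorem \ref{thm:online.sgd.leaky.relu}, which is harmless because it is hidden in $\tilde O(\cdot)$ and does not affect the optimal choice of $\gamma$ up to logarithmic factors.
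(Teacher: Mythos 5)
Your proposal matches the paper's proof essentially step for step: invoke Example \ref{example:soft.margin}(2) to take $\phi_{v^*}(\gamma)\leq 2U\gamma$, substitute into Theorem \ref{thm:online.sgd.leaky.relu} with $-\ell'(0)=\nicefrac 12$, and balance the $U\gamma$ term against the $\gamma^{-1}C_m\optlin\log(\nicefrac 1\optlin)$ term by choosing $\gamma=\sqrt{\optlin}$, which is exactly how the paper obtains $\tilde O(\sqrt{\optlin})$. One small correction on the iteration count: anti-concentration only gives an \emph{upper} bound on $\phi_{v^*}$, so the claim $\phi_{v^*}(\gamma)+\optlin=\Theta(\sqrt{\optlin})$ is not justified (and would in any case give $\eta^{-1}\optlin^{-2}$ rather than the stated $\eta^{-1}\optlin^{-3}$); the stated $T=O(\eta^{-1}\optlin^{-3}[1\vee G_0])$ follows simply by using $\phi_{v^*}(\gamma)\geq 0$, i.e. $[\phi_{v^*}(\gamma)+\optlin]^{-2}\leq \optlin^{-2}$, together with $\gamma^{-2}=\optlin^{-1}$, so the conclusion is unaffected.
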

\begin{proof}
By Example \ref{example:soft.margin}, $\phi_{v^*}(\gamma) \leq 2 U \gamma$.  Substituting this into Theorem \ref{thm:online.sgd.leaky.relu} and using that $-\ell'(0)=\nicefrac 12$, we get
\[ \P_{(x,y)\sim \calD} \Big(y \neq \sgn\big( f_x(\Wt {t^*})\big) \Big) \leq 4 \alpha^{-1}\big [ 2 U \gamma + 3 \gamma^{-1} C_m \opt \log(\nicefrac 1 {\opt})\big].\]
This bound is optimized when $\gamma = \opt^{1/2}$, and results in a bound of the form $O(\opt^{1/2} \log(\nicefrac 1 {\opt}))$.
\end{proof}
The above corollary covers, for instance, log-concave isotropic distributions like the Gaussian or the uniform distribution over a convex set by Example \ref{example:soft.margin}. 

Taken together, Corollaries \ref{corollary:hard.margin} and \ref{corollary:anti.concentration} demonstrate that despite the capacity for overparameterized neural networks to overfit to the data, SGD-trained neural networks are fairly robust to adversarial label noise.  We emphasize that our results hold for SGD-trained neural networks of arbitrary width and following an arbitrary initialization, and that the resulting generalization and sample complexity do not depend on the number of neurons $m$.  In particular, the above phenomenon cannot be explained by the neural tangent kernel approximation, which is highly dependent on assumptions about the initialization, learning rate, and number of neurons.

\subsection{Comparisons with Related Work}
We now discuss how our result relates to others appearing in the literature.  First,
\citet{brutzkus2018sgd} showed that by running multiple-pass SGD on the hinge loss one can learn linearly separable data.  They assume a noiseless $(\optlin=0)$ model over a norm-bounded domain and assume a hard margin distribution, so that $y\sip{v^*}x > \gamma_0$ for some $\gamma_0>0$.  In the noiseless setting, Corollaries~\ref{corollary:hard.margin} and \ref{corollary:anti.concentration} generalize their result to include unbounded, linearly separable (marginal) distributions without a hard margin like log-concave isotropic distributions.  More significantly, our results hold in the adversarial label noise setting (a.k.a., agnostic PAC learning).  This allows for us to compare the generalization of an SGD-trained neural network with that of the \textit{best} linear classifier over the distribution, and make a much more general claim about the dynamics of SGD-trained neural networks.

\citet{hu2020surprising} showed that for sufficiently wide neural networks with the NTK initialization scheme, and under the assumption that the components of the input distribution are independent, the dynamics in the early stages of SGD-training are closely related to that of a linear predictor defined in terms of the NTK of the neural network.  By contrast, our result holds for any initialization and neural networks of any width and covers a larger class of distributions.  Their result was for the squared loss, while ours holds for the standard losses used for classification problems.  Our results can be understood as a claim about the `early training dynamics' of SGD, since we show that there exists \textit{some} iterate of SGD that performs almost as well as the best linear classifier over the distribution, and we provide an upper bound on the number of iterations required to reach this point.  One might expect that under more stringent assumptions (on, say, the initialization, learning rate schedule, and/or network architecture), stronger guarantees for the classification error could hold in the later stages of training; we will revisit this question with experimental results in Section \ref{sec:experiment}.

\citet{li2019labelnoise} considered a handcrafted distribution consisting of noisy clusters and showed that sufficiently wide one-hidden-layer neural networks trained by GD on the squared loss with the NTK initialization have favorable properties in the early training dynamics.   A direct comparison of our results is difficult as they do not provide a guarantee for the generalization error of the resulting neural network.  But at a high level, their analysis focused on a noise model akin to Massart noise (a more restrictive setting than the agnostic noise considered in this paper), and they made a number of assumptions---a particular (large) initialization, sufficiently wide network, and the use of the squared loss for classification---that were not used in this work.  The results of~\citet{li2019labelnoise} covered general, smooth activation functions (but not leaky-ReLU). 

\citet{hu2020simple} showed that ultra-wide networks with NTK scaling and initialization trained by SGD with various forms of regularization can generalize when the labels are corrupted with random classification noise.  Their generalization bound was given in terms of the classification error on the `clean' data distribution (without any noise) and allowed for general activation functions (including leaky-ReLU).    In comparison, we assume that the training data and the test data come from the same distribution, and our generalization bound is given in terms of the performance of the best linear classifier over the distribution.  Our generalization guarantee holds without any explicit forms of regularization, suggesting that the mechanism responsible for the lack of overfitting is not explicit regularization, but forms of regularization that are \textit{implicit} to the SGD algorithm.  

\section{Proof of the Main Results}
We will show that stochastic gradient descent achieves small classification error by using a proof technique similar to that of~\citet{brutzkus2018sgd}, who showed the convergence and generalization of gradient descent on the hinge loss for one-hidden-layer leaky ReLU networks on linearly separable data.\footnote{This proof technique can be viewed as an extension of the Perceptron proof presented in~\citet[Theorem 9.1]{shalevschwartz}.}   Their proof relies upon the fact that both the classification error and the hinge loss for the best halfspace are zero.  In our setting---without the assumption of linear separability, and with more general loss functions---their strategy for showing that the empirical risk can be driven to zero will not work.  (We remind the reader that our goal is to show that the neural network will generalize when it is of \textit{arbitrary} width, and when significant noise is present, and thus we cannot guarantee the smallest empirical or population loss is arbitrarily close to zero.)  Instead, we need to compare the performance of the neural network with that of the best linear classifier over the data, which will in general have error (both classification and loss value) bounded away from zero.  To do so, we use some of the ideas used in~\citet{frei2020halfspace} to derive generalization bounds for the classification error when the surrogate loss is bounded away from zero. 

To begin, let us introduce some notation.  Let $v^*\in \R^d$ be a unit norm halfspace that minimizes the halfspace error, so that
\[ \P_{(x,y)\sim \calD} \Big( y \neq \sgn\big(\sip{v^*}x\big)\Big) =  \optlin.\]
Denote the matrix $V\in \R^{m\times d}$ as having rows $v_j^\top \in \R^d$ defined by
\begin{equation} \label{eq:vdef}
v_j = \f 1 {\sqrt {m}} \sgn(a_j) v^*.
\end{equation}
The scaling of each row of the matrix $V$ ensures that $\pnorm{V}{F}=1$.  For $\gamma>0$, denote
\begin{equation*}
    \hat \xi_t(\gamma) := \ind(y_t \sip{v^*}{x_t} \in [0,\gamma)) + (1 +\gamma^{-1} |\sip{v^*}{x_t}|) \ind(y_t \sip{v^*}{x_t} < 0).
\end{equation*}
The expected value of the above quantity will be an important quantity in our proof.  To give some idea of how this quantity will fit in to our analysis, assume for the moment that $\norm{x}\leq 1$ a.s. Then taking expectations of the above and using Cauchy--Schwarz, we get
\begin{equation}\E \hat \xi_t(\gamma) \leq \phi_{v^*}(\gamma) + (1 + \gamma^{-1}) \E[|\sip{v^*}{x_t}| \ind(y_t \sip{v^*}{x_t} < 0)] \leq \phi_{v^*}(\gamma) + (1+ \gamma^{-1}) \optlin.\label{eq:xi.def}
\end{equation}
The above appears (in a more general form) in the bound for the classification error presented in Theorem~\ref{thm:online.sgd.leaky.relu}.  In particular, the goal below will be to show that the classification error can be bounded by a constant multiple of $\E[\hat \xi_t(\gamma)]$.

Continuing, let us denote
\begin{equation}\label{eq:def.hath}
    \hat H_t := \sip{\Wt t}V,\quad \hat G_t^2 = \pnorm{\Wt t}F^2.
\end{equation}
The quantity $\hat H_t$ measures the correlation between the weights found by SGD and those of the best linear classifier over the distribution.   We define the population-level versions of each of the random variables above by replacing the $\hat \cdot$ with their expectation $\esgd(\cdot)$ over the randomness of the draws $(x_1, \dots, x_t)$ of the distribution used for SGD.  That is,
\begin{align}\nonumber
    L_t &:= \esgd \hat L_t (\Wt t),\\\nonumber
    \calE_t &:= \esgd \hat \calE_t(\Wt t),\\\nonumber
    H_t &:= \esgd \hat H_t,\\\nonumber
    G_t^2 &:= \esgd [\hat G_t^2],\\
    \xi(\gamma) &:= \E_{(x_t,y_t)\sim \calD} \hat \xi_t(\gamma).\label{eq:H.G.def}
\end{align}

Our proof strategy will be to show that until gradient descent finds weights with small risk, the correlation $H_T$ between the weights found by SGD and those of the best linear predictor will grow at least as fast as $\Omega(T)$, while $G_T$ always grows at a rate of at most $O(\sqrt{T})$. Since $\pnorm VF=1$, by Cauchy--Schwarz we have the bound $H_T \leq G_T$, and so the growth rates $H_T = \Omega(T)$ and $G_T = O(\sqrt{T})$ can only be satisfied for a small number of iterations.  In particular, there can only be a small number of iterations until SGD finds weights with small risk.

To see how we might be able to show that the correlation $H_T$ is increasing, note that we have the identity
\begin{align*}\hat H_{t+1} -\hat H_t&= - \eta \sip{\nabla \hat L_t(\Wt t)}V = - \eta \ell'(y_t f_t(\Wt t)) y_t \sip{\nabla f_t(\Wt t)}V.\end{align*}
Since $-\ell'\geq 0$, the inequality $\hat H_{t+1}>\hat H_t$ holds if we can show $y_t \sip{\nabla f_t(\Wt t)}{V}>0$, i.e. if we can show that the gradient of the neural network is correlated with the weights of the best linear predictor.   For this reason, the following technical lemma is a key ingredient in our proof.
\begin{lemma}\label{lemma:key.identity}
For $V$ defined in \eqref{eq:vdef}, for any $(x_t,y_t)\in \R^d \times \{ \pm 1\}$, for any $W\in \R^{m\times d}$, and any $\gamma\in (0,1)$,
\begin{equation}\label{eq:key.identity.strictly.increasing.lipschitz}
     y_t\sip{\nabla f_t(W)}{V} \geq a  \gamma \sqrt{m} \Big[ \alpha - \hat \xi_t(\gamma)\Big].
\end{equation}
\end{lemma}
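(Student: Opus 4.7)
The proof will proceed by an explicit computation of $y_t\sip{\nabla f_t(W)}{V}$ followed by a case analysis on the signed margin $y_t\sip{v^*}{x_t}$ relative to $\gamma$. First, I would write out the gradient componentwise. For the leaky-ReLU activation $\sigma$, we have $\sigma'(z)\in\{\alpha,1\}\subset[\alpha,1]$, and $\nabla_{w_j} f_t(W) = a_j\sigma'(\sip{w_j}{x_t})x_t$. Combining this with the definition $v_j = m^{-1/2}\sgn(a_j) v^*$ and the fact that $a_j\sgn(a_j)=|a_j|=a$, the inner product telescopes into
\begin{equation*}
y_t\sip{\nabla f_t(W)}{V} \;=\; \frac{a}{\sqrt{m}}\, y_t\sip{v^*}{x_t}\sum_{j=1}^m \sigma'(\sip{w_j}{x_t}).
\end{equation*}
This is the central identity; everything that follows is just choosing the right bound on $\sum_j \sigma'$ depending on the sign of $y_t\sip{v^*}{x_t}$.

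Next, I would split into three cases based on the value of $y_t\sip{v^*}{x_t}$, each corresponding to a different contribution to $\hat\xi_t(\gamma)$.
\begin{enumerate}[(i)]
\item If $y_t\sip{v^*}{x_t}\geq \gamma$, then the prefactor is nonnegative so I use the lower bound $\sum_j\sigma'\geq \alpha m$ to get $y_t\sip{\nabla f_t(W)}{V}\geq a\alpha\gamma\sqrt{m}$; here $\hat\xi_t(\gamma)=0$ so the claim follows with equality-at-worst.
\item If $y_t\sip{v^*}{x_t}\in[0,\gamma)$, then the LHS is nonnegative while the RHS equals $a\gamma\sqrt{m}(\alpha-1)\leq 0$ since $\alpha\leq 1$, so the inequality is immediate.
\item If $y_t\sip{v^*}{x_t}<0$, the prefactor is negative, so to \emph{lower} bound the product I use the upper bound $\sum_j\sigma'\leq m$, giving $y_t\sip{\nabla f_t(W)}{V}\geq -a\sqrt{m}|\sip{v^*}{x_t}|$. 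The RHS is $a\gamma\sqrt{m}\bigl[\alpha - 1 - \gamma^{-1}|\sip{v^*}{x_t}|\bigr]$, and after dividing by $a\sqrt{m}$ the inequality reduces to $0\geq \gamma(\alpha-1)$, again true.
\end{enumerate}

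There is no real obstacle here: once the identity displayed above is written down, the lemma is a one-line case check. The only subtle point to get right is the direction of the bound on $\sum_j\sigma'$---the lower bound $\alpha m$ is the correct one to use when the prefactor $y_t\sip{v^*}{x_t}$ is nonnegative, and the upper bound $m$ is the correct one when the prefactor is negative. The asymmetric definition of $\hat\xi_t(\gamma)$---with an indicator term for the near-margin region and the larger $(1+\gamma^{-1}|\sip{v^*}{x_t}|)$ penalty on the wrong-side region---is chosen precisely so that these three cases collapse into the single inequality \eqref{eq:key.identity.strictly.increasing.lipschitz}. Combining the three cases yields the lemma.
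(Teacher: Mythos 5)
Your proposal is correct and follows essentially the same route as the paper: both start from the identity $y_t\sip{\nabla f_t(W)}{V} = a m^{-1/2}\, y_t\sip{v^*}{x_t}\sum_{j=1}^m \sigma'(\sip{w_j}{x_t})$ and split according to whether $y_t\sip{v^*}{x_t}\geq\gamma$, lies in $[0,\gamma)$, or is negative, using $\sum_j\sigma'(\sip{w_j}{x_t})\in[\alpha m, m]$ with the lower bound on the favorable term and the upper bound on the unfavorable one. The only cosmetic difference is that the paper keeps the three regimes as indicator terms in a single chain of inequalities (and states an intermediate version for general non-decreasing activations before specializing to leaky ReLU), whereas you carry out the equivalent case check explicitly.
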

The proof of the above lemma is in Appendix \ref{appendix:key.identity.proof}.  As alluded to above, with this technical lemma we can show that until the surrogate risk is as small as a constant factor of $\xi(\gamma)$, the correlation of the weights found by SGD and those of the best linear predictor is increasing. 

\begin{lemma}\label{lemma:H.lowerbound.sgd}
For any $t\in \N\cup \{0\}$, for any $\gamma>0$, it holds that 
\[ H_{t+1} \geq  H_t + \eta a \gamma \sqrt{m} \big[\alpha  \calE_t -\xi(\gamma)\big].\]
\end{lemma}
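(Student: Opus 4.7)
The plan is to start from the one-step SGD identity, apply the key correlation lower bound of Lemma~\ref{lemma:key.identity} pointwise, and then take expectation, using the fact that $-\ell'$ is bounded by $1$ to decouple the cross term involving $\hat\calE_t$ and $\hat\xi_t(\gamma)$.

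First, from the SGD update rule and linearity of the inner product $\sip{\cdot}{V}$, I would write the one-step change in the sample correlation as
\[ \hat H_{t+1} - \hat H_t = \sip{\Wt{t+1}-\Wt t}{V} = \eta\, \hat\calE_t(\Wt t)\, y_t \sip{\nabla f_t(\Wt t)}{V}. \]
This is just unpacking the definition of the update together with the chain rule that produced the factor $\hat\calE_t(\Wt t) = -\ell'(y_t f_t(\Wt t))$. Because $\ell$ is decreasing, $\hat\calE_t(\Wt t) \geq 0$, so applying Lemma~\ref{lemma:key.identity} (pointwise, for any $(x_t,y_t)$ and any $W = \Wt t$) and multiplying through preserves the inequality:
\[ \hat H_{t+1} - \hat H_t \;\geq\; \eta\, a\gamma\sqrt m\, \hat\calE_t(\Wt t)\, \bigl[\alpha - \hat\xi_t(\gamma)\bigr] \;=\; \eta\, a\gamma\sqrt m\, \bigl[\alpha\, \hat\calE_t(\Wt t) \;-\; \hat\calE_t(\Wt t)\hat\xi_t(\gamma)\bigr]. \]

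Next I take expectation $\esgd$ over the sample path $(x_1,\dots,x_t)$. The first term on the right becomes $\alpha\, \calE_t$ by the definition in \eqref{eq:H.G.def}. For the cross term $\esgd[\hat\calE_t(\Wt t)\hat\xi_t(\gamma)]$, the key observation is that by Assumption~\ref{assumption:loss}, $\ell$ is $1$-Lipschitz, hence $|\ell'|\leq 1$ and $0\leq \hat\calE_t(\Wt t) \leq 1$; moreover $\hat\xi_t(\gamma)\geq 0$ since it is a sum of nonnegative indicators. Therefore pointwise $\hat\calE_t(\Wt t)\hat\xi_t(\gamma)\leq \hat\xi_t(\gamma)$, and taking expectations gives
\[ \esgd\bigl[\hat\calE_t(\Wt t)\,\hat\xi_t(\gamma)\bigr] \;\leq\; \esgd\,\hat\xi_t(\gamma) \;=\; \E_{(x_t,y_t)\sim\calD}\hat\xi_t(\gamma) \;=\; \xi(\gamma), \]
where the second equality uses that $\hat\xi_t(\gamma)$ depends only on the fresh sample $(x_t,y_t)$, which is independent of $(x_1,\dots,x_{t-1})$. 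Combining these two bounds yields the claimed inequality $H_{t+1}\geq H_t + \eta a\gamma\sqrt m[\alpha\calE_t - \xi(\gamma)]$.

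The only step that requires any real thought is the cross term: one might worry that the inequality from Lemma~\ref{lemma:key.identity} could be weakened after multiplying by $\hat\calE_t$ and taking expectations, because $\hat\calE_t(\Wt t)$ and $\hat\xi_t(\gamma)$ are correlated (both depend on $(x_t,y_t)$). The clean way around this is precisely the pointwise bound $\hat\calE_t \leq 1$ afforded by the $1$-Lipschitz assumption on $\ell$, which lets us simply drop $\hat\calE_t$ from the product without worrying about the joint distribution. Once this is in hand, everything else is a direct expectation computation and the lemma follows.
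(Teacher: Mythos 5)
Your proof is correct and follows essentially the same route as the paper's: the one-step identity $\hat H_{t+1}-\hat H_t=\eta\,\hat\calE_t(\Wt t)\,y_t\sip{\nabla f_t(\Wt t)}{V}$, Lemma~\ref{lemma:key.identity} combined with $-\ell'\geq 0$, the pointwise bound $\hat\calE_t(\Wt t)\hat\xi_t(\gamma)\leq \hat\xi_t(\gamma)$ from $-\ell'\leq 1$, and then taking expectations. Your explicit remark that $\hat\xi_t(\gamma)$ depends only on the fresh sample is a nice clarification of a step the paper leaves implicit, but the argument is the same.
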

\begin{proof}
Since $\hat H_{t+1} = \sip{\Wt {t+1}}V = \sip{\Wt t}V - \eta \sip{\hat L_t(\Wt t)}V$, we can write
\begin{align*}
    \hat H_{t+1} &= \hat H_t - \eta \sip{\nabla \hat L_t(\Wt t)}V \\
    &= \hat H_t - \eta \ell'(y_t f_t(\Wt t)) y_t \sip{\nabla f_t(\Wt t)}V \\
    &\geq \hat H_t - \eta \ell'(y_t f_t(\Wt t)) a \gamma \sqrt m [\alpha - \hat \xi_t(\gamma)]\\
    &\geq \hat H_t + \eta a \gamma \sqrt m \l[ \alpha \hat \calE_t(\Wt t) - \hat \xi_t(\gamma)\r].
\end{align*}
 In the first inequality we have used Lemma \ref{lemma:key.identity} and that $-\ell'\geq 0$, and in the second inequality we have used that $-\ell' \leq 1$. 
Taking expectations over the draws of the distribution on both sides completes the proof. 
\end{proof}

Notice that if $\alpha \calE_t > \xi(\gamma)$, Lemma \ref{lemma:H.lowerbound.sgd} shows that $H_{t+1} - H_t >0$.  We will later repeat this argument for $T$ iterations to show that until we find a point with $\alpha \calE_t \leq 2 \xi(\gamma)$, $H_T$ will grow at least as fast as $\Omega(T)$.  

All that remains is to show that $G_T = O(\sqrt{T})$.  We will accomplish this by first demonstrating a bound on $G_{t+1}^2 - G_{t}^2$.  
\begin{lemma}\label{lemma:grad.ub.sgd}
For any $t\in \N \cup \{0\}$, $\eta >0$, and if $\E[\norm{x}^2]\leq B_X^2$, 
\[G_{t+1}^2 \leq G_t^2 + 2 \eta +  \eta^2 m a^2 B_X^2.\]
\end{lemma}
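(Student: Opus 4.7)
The plan is to expand $\hat G_{t+1}^2 = \|\Wt{t} - \eta \nabla \hat L_t(\Wt t)\|_F^2$ pointwise, bound the cross and squared terms using the structure of the leaky ReLU and the loss assumptions, and then take expectations. Writing $\nabla \hat L_t(\Wt t) = \ell'(y_t f_t(\Wt t))\, y_t \nabla f_t(\Wt t)$, the expansion yields
\[
\hat G_{t+1}^2 = \hat G_t^2 \;-\; 2\eta\, \ell'(y_t f_t(\Wt t))\, y_t \langle \Wt t, \nabla f_t(\Wt t)\rangle \;+\; \eta^2\, [\ell'(y_t f_t(\Wt t))]^2 \|\nabla f_t(\Wt t)\|_F^2.
\]
So the proof reduces to controlling the two terms on the right.

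For the quadratic term, the $j$th row of $\nabla f_t(\Wt t)$ is $a_j \sigma'(\sip{w_j^{(t)}}{x_t}) x_t^\top$; since $|\sigma'| \leq 1$ and $a_j^2 = a^2$, summing over $j$ gives $\|\nabla f_t(\Wt t)\|_F^2 \leq m a^2 \|x_t\|^2$. Combined with $|\ell'| \leq 1$ from Assumption~\ref{assumption:loss}, the squared term is at most $\eta^2 m a^2 \|x_t\|^2$.

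For the cross term, the key step is to exploit that leaky ReLU is $1$-homogeneous, which gives $\sigma'(z)z = \sigma(z)$ and hence the Euler-type identity $\langle \Wt t, \nabla f_t(\Wt t)\rangle = \sum_j a_j \sigma'(\sip{w_j^{(t)}}{x_t})\sip{w_j^{(t)}}{x_t} = f_t(\Wt t)$. Letting $z := y_t f_t(\Wt t)$, the cross term equals $-2\eta\, \ell'(z) z$, and the whole task becomes showing $-\ell'(z)z \leq 1$ for all $z\in \R$. I would split into cases: for $z < 0$, $\ell'(z) \leq 0$ (since $\ell$ is decreasing) gives $-\ell'(z) z \leq 0$; for $z \in [0,1]$, the $1$-Lipschitz property yields $|\ell'(z)| \leq 1$ and hence $-\ell'(z) z \leq z \leq 1$; for $z \geq 1$, the tail assumption $-\ell'(z) \leq 1/z$ gives $-\ell'(z) z \leq 1$. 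This is the only slightly delicate piece, since it is where all three components of Assumption~\ref{assumption:loss} get invoked.

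Putting the two bounds together gives $\hat G_{t+1}^2 \leq \hat G_t^2 + 2\eta + \eta^2 m a^2 \|x_t\|^2$ pointwise. Taking $\esgd$ over the samples $(x_1,\dots,x_t)$ used in the SGD trajectory and then over the fresh draw $x_t$, and using $\E[\|x_t\|^2] \leq B_X^2$, yields $G_{t+1}^2 \leq G_t^2 + 2\eta + \eta^2 m a^2 B_X^2$ as claimed. The main obstacle, as noted, is verifying $-\ell'(z)z \leq 1$ uniformly in $z$ — everything else is direct calculation.
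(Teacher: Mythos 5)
Your proposal is correct and follows essentially the same route as the paper's proof: expand the squared Frobenius norm, use homogeneity of the leaky ReLU to reduce the cross term to $-\ell'(z)z \leq 1$ (which the paper verifies with the same case analysis, just more tersely), bound the gradient norm by $ma^2\|x_t\|^2$ via $|\ell'|\leq 1$ and $|\sigma'|\leq 1$, and take expectations. No gaps.
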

\begin{proof}
We begin with the identity
\begin{align}
    \hat G_{t+1}^2 &= \pnorm{\Wt t- \eta \nabla \hat L_t(\Wt t)}F^2 = \pnorm{\Wt t}F^2 - 2 \eta \ip{\Wt t}{\nabla \hat L_t(\Wt t)} - \eta^2 \pnorm{\nabla \hat L_t(\Wt t)}F^2.\label{eq:gradient.ub.init.sgd}
\end{align}
We proceed by analyzing the last two terms. We have
\begin{align*}
    \sip{\Wt t}{\nabla \hat L_t(\Wt t)} &= \ell'(y_t f_t(\Wt t)) y_t \sip{\Wt t}{\nabla f_{x_t}(\Wt t)} \\
    &= \ell'(y_t f_t(\Wt t))  y_t \summ j {m} a_j \sigma'(\sip{\wt t_j}{x_t}) \sip{\wt t_j}{x_t} \\
    &= \ell'(y_t f_t(\Wt t)) y_t \summ j {m}  a_j \sigma(\sip{\wt t_j}{x_t}) \\
    &=  \ell'(y_t f_t(\Wt t)) y_t f_t(\Wt t)).
\end{align*}
The third equality uses that $\sigma$ is homogeneous, so $\sigma'(z) z = \sigma(z)$.  We can therefore bound
\begin{equation}\label{eq:grad.ub.second.term.sgd}
- 2 \eta \ip{\Wt t}{\nabla \hat L_t(\Wt t)} =  -2\eta \ell'(y_t f_t(\Wt t)) y_t f_t(\Wt t) \leq 2\eta.
\end{equation}
To see that the inequality holds, note that $-\ell'(z)\cdot z\leq 1$ if $z\leq 1$ since $-\ell'(z)\in [0,1]$, and if $z\geq 1$ then $-\ell'(z)\leq 1/z$ by Assumption \ref{assumption:loss}.  
For the gradient norm term, if we denote $\vec a\in \R^m$ as the vector with $j$-th entry $a_j$ and $\Sigma_t^W\in \R^{m\times m}$ as the diagonal matrix with $j$-th diagonal entry $\sigma'(\sip{w_j}{x_t})$, then
\begin{align}\nonumber
    \pnorm{\nabla \hat L_t(W)}F^2 &= \pnorm{ \ell'(y_t f_t(W)) \Sigma_t^W \vec a x_t^\top}F^2 \\ \nonumber
    &= \ell'(y_t f_t(W))^2 \pnorm{\Sigma_t^W \vec a}2^2 \norm{x_t}^2\\
    &\leq m a^2 \norm{x_t}^2.\label{eq:grad.ub.third.term.sgd}
\end{align}
The second equation
uses that $\pnorm{bd^\top}F = \pnorm{b}2\pnorm{d}2$ for vectors $b,d$.  The inequality uses that $|\ell'|\in [0,1]$. 

Substituting \eqref{eq:grad.ub.second.term.sgd} and \eqref{eq:grad.ub.third.term.sgd} into \eqref{eq:gradient.ub.init.sgd}, we get
\begin{equation*}
    \hat G_{t+1}^2 \leq \hat G_t^2 + 2 \eta + m a^2 \eta^2 \norm{x_t}^2.
\end{equation*}
Taking expectations of both sides over the draws of the distribution we get
\begin{equation*}
    G_{t+1}^2 \leq G_t^2 + 2 \eta + m a^2 \eta^2 B_X^2,
\end{equation*}
where we have used that $\E[\norm{x}^2]\leq B_X^2$. 
\end{proof}

We now have all of the ingredients needed to prove Theorem \ref{thm:online.sgd.leaky.relu}.  
\begin{proof}[Proof of Theorem \ref{thm:online.sgd.leaky.relu}]
First, let us note that for $V$ defined as \eqref{eq:vdef} (satisfying $\pnorm VF=1$), we have by Cauchy--Schwarz,
\begin{equation}\label{eq:h.vs.g.inequality.cs.sgd}
H_t^2 = (\E[\sip{\Wt t}V])^2 \leq \E\snorm{\Wt t}_F^2 \E\pnorm{V}F^2 = G_t^2 \iff |H_t| \leq G_t.
\end{equation}
For $a = 1/\sqrt m$, and for $\eta \leq (m a^2 B_X^2)^{-1} = B_X^{-2}$, Lemma \ref{lemma:grad.ub.sgd} becomes
\begin{equation*}
    G_{t+1}^2 \leq G_t^2 + 2 \eta + \eta^2 m a^2 B_X^2 \leq G_t^2 + 3 \eta.
\end{equation*}
Summing the above from $t=0, \dots, T-1$, we get
\begin{equation}\label{eq:grad.ub.1/sqrtk}
    G_T^2 \leq G_0^2 + 3 \eta T.
\end{equation}
Similarly, Lemma \ref{lemma:H.lowerbound.sgd} becomes
\begin{equation}\nonumber
    H_{t+1} \geq H_t + \eta \gamma[\alpha \calE_t - \xi].
\end{equation}
(Note that $\xi = \xi(\gamma)$ depends on $\gamma$, but we have dropped the notation for simplicity.)  Summing the above, we get
\begin{equation}\label{eq:H.lowerbound.1/sqrtk}
    H_T \geq H_0 + \eta \gamma \summm t 0 {T-1}[\alpha \calE_t - \xi].
\end{equation}
We can therefore bound
\begin{align}\nonumber
    -G_0 + \eta \gamma \summm t 0 {T-1} [\alpha \calE_t - \xi] &\leq H_0 + \eta   \gamma  \summm t 0 {T-1} [\alpha \calE_t - \xi]\\ \nonumber
    &\leq H_T \\ \nonumber
    &\leq G_T \\
    &\leq G_0 + \sqrt{T} \cdot 2 \sqrt \eta. \label{eq:sgd.final.ineq}
\end{align}
The first inequality uses~\eqref{eq:h.vs.g.inequality.cs.sgd}.  The second inequality uses \eqref{eq:H.lowerbound.1/sqrtk}.  The third inequality again uses~\eqref{eq:h.vs.g.inequality.cs.sgd}.  The final inequality uses \eqref{eq:grad.ub.1/sqrtk} together with $\sqrt{a+b}\leq \sqrt a + \sqrt b$.  

We claim now that this implies that within a polynomial number of samples, SGD finds weights satisfying $\calE_t \leq 2 \alpha^{-1} \xi$.  Suppose that for every iteration $t=1, \dots, T$, we have $\calE_t > 2\alpha^{-1}  \xi$.  Then~\eqref{eq:sgd.final.ineq} gives
\begin{equation*}
    \eta \alpha \gamma \xi T \leq 2 G_0 + 2 \sqrt \eta \cdot \sqrt T \iff \eta \alpha \gamma \xi \cdot T - 2 \sqrt \eta \cdot \sqrt T - 2 G_0 \leq 0.
\end{equation*}
This is an equation of the form $\beta_2 x^2 -\beta_1 x -\beta_0 \leq 0$, and thus using the quadratic formula, this implies $\sqrt T \leq (2 \beta_2)^{-1}(-\beta_1 + \sqrt{\beta_1^2 - 4 \beta_0 \beta_2})$.  
Squaring both sides and using a bit of algebra, this implies
\[ T \leq  \beta_2^{-2} \beta_1^2 + \beta_2^{-3/2} \beta_1 \beta_0^{1/2} + \beta_2^{-1}\beta_0.\]
In particular, we have
\begin{align*}
    T &\leq \eta^{-2} \alpha^{-2} \gamma^{-2} \xi^{-2}\cdot 4 \eta  +  \eta^{-3/2} \alpha^{-3/2} \gamma^{-3/2} \xi^{-3/2} \cdot 2 \eta^{1/2} \cdot G_0^{1/2} + \eta^{-1} \alpha^{-1} \xi^{-1} \cdot 2 G_0 \\
    &\leq 4 \eta^{-1} \alpha^{-2} \gamma^{-2} \xi^{-2} (G_0\vee 1).
\end{align*}
That is, within $T = O(\eta^{-1} \gamma^{-2} \xi^{-2} [G_0\vee 1])$ iterations, gradient descent finds a point satisfying
\begin{equation} \label{eq:sgd.guarantee}
\calE_t = \esgd \big[ -\ell'\big(y f_x(\Wt t)\big) \big] \leq 2\alpha^{-1} \xi.
\end{equation}
By Markov's inequality (see \eqref{eq:markov.inequality.calc})  this implies
\[ \P(y f_x(\Wt t) < 0) \leq 2 |\ell'(0)|^{-1} \alpha^{-1} \xi.\]
To complete the proof, we want to bound $\xi$.  Recall from the calculation \eqref{eq:xi.def} that
\[ \xi = \xi(\gamma) = \phi_{v^*}(\gamma) +  \optlin + \gamma^{-1} \E\Big[|\sip{v^*}x| \ind(y\sip{v^*}x<0)\Big].\]
Fix $\rho >0$ to be chosen later.  We can write
\begin{align}\nonumber
\E[|\sip{v^*}x| \ind(y\sip{v^*}x < 0)] &= \E[|\bar v^\top x| \ind(y \bar v^\top x \leq 0,\ |\bar v^\top x| > \rho)] + \E[|\bar v^\top x| \ind(y \bar v^\top x \leq 0,\ |\bar v^\top x| \leq \rho)]\\ 
\nonumber
&\leq \rho \optlin + \int_\rho^\infty \P(|\bar v^\top x| > t) \mathrm dt \\
\nonumber
&\leq \rho \optlin + \int_\rho^\infty \exp(-t / C_m) \mathrm dt \\
\label{eq:xi.bound}
&= \rho \optlin + C_m\exp(-\rho/C_m).
\end{align}
The first inequality comes from Cauchy--Schwarz, the second from truncating, and the last from the definition of $C_m$-sub-exponential.  Taking $\rho = C_m \log(\nicefrac 1 {\opt})$ results in
\[ \E[|\sip{v^*}x| \ind(y \bar v^\top x \leq 0)]\leq  C_m \optlin \log(\nicefrac 1{\optlin}) + C_m \optlin .\]
Substituting the above into \eqref{eq:sgd.guarantee}, we get
\[ \P(y f_x(\Wt t) < 0) \leq 2 |\ell'(0)|^{-1} \alpha^{-1} \l[ \phi_{v^*}(\gamma) + (1 + \gamma^{-1} C_m) \optlin + \gamma^{-1} C_m \optlin \log(\nicefrac 1{\optlin})\r].\]
\end{proof}

\section{Experiments}\label{sec:experiment}

In this section, we provide some experimental verification of our theoretical results.  We consider a distribution $\calD_{b,\gamma_0}$ that is a mixture of two 2D Gaussians perturbed by both random classification noise and deterministic (adversarial) label noise.  The distribution is constructed as follows.  We first take two independent Gaussians with independent components of unit variance and means $(-3, 0)$ and $(3,0)$, and assign the label $-1$ to the left cluster and $+1$ to the right cluster.  We remove all samples with first component $x_1$ satisfying $|x_1| \leq \gamma_0 = 0.5$, so that we have a hard margin distribution with margin $\gamma_0$.  We then introduce a boundary factor $b>\gamma_0$, and for samples with first component satisfying $|x_1|\leq b$ we deterministically flip the label to the opposite sign.   Finally, for samples with $|x_1| > b$, we introduce random classification noise at level $10\%$, flipping the labels in those regions with probability $0.1$ each.   The symmetry of the distribution implies that an optimal halfspace is the vector $v^*=(1,0)$. 

\begin{figure*}[h]
     \centering
     \begin{subfigure}[b]{0.48\textwidth}
         \centering
         \includegraphics[width=0.75\textwidth]{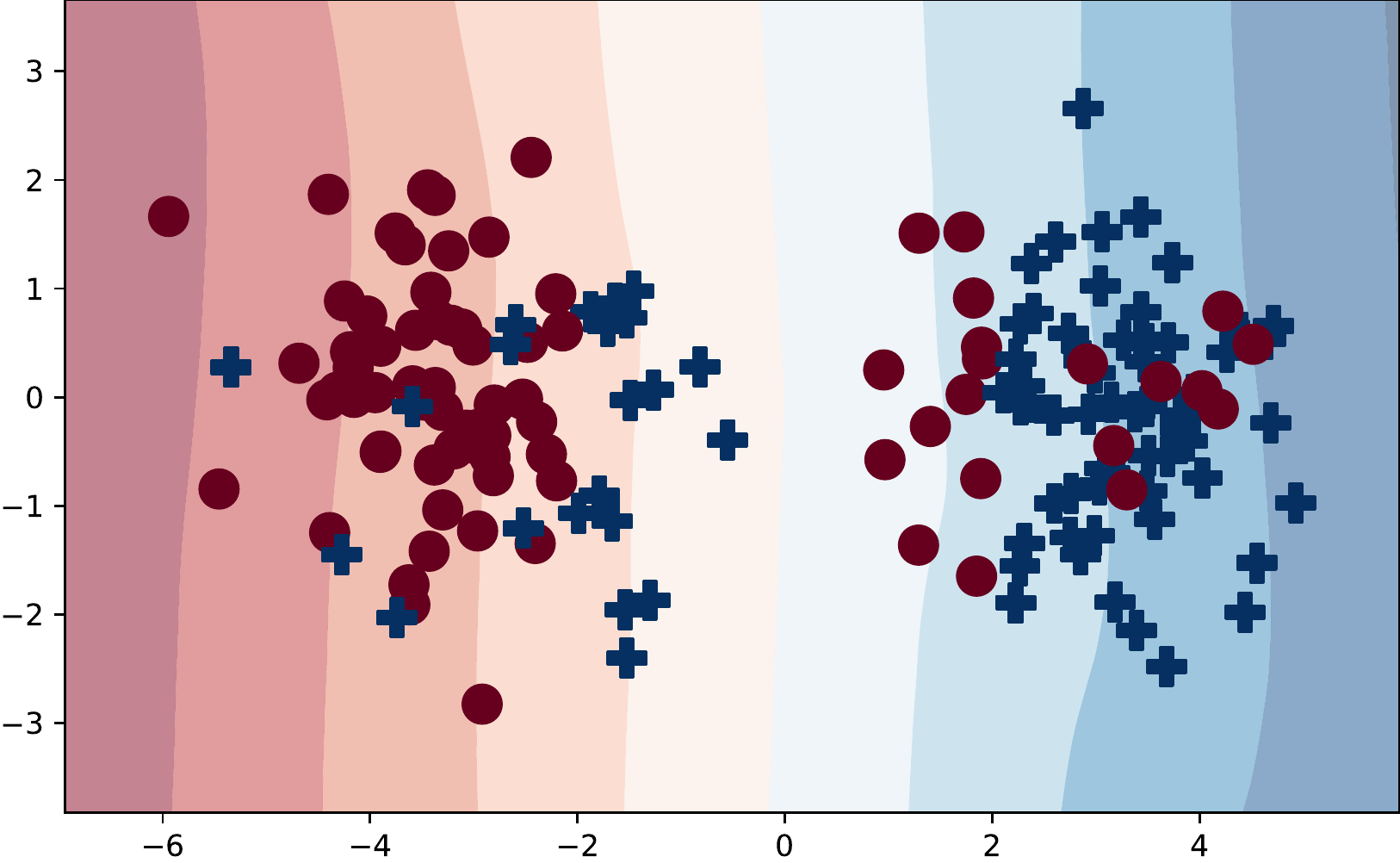}
         \caption{}
         \label{fig:data}
     \end{subfigure}
     \hfill
     \begin{subfigure}[b]{0.48\textwidth}
         \centering
         \includegraphics[width=0.7\textwidth]{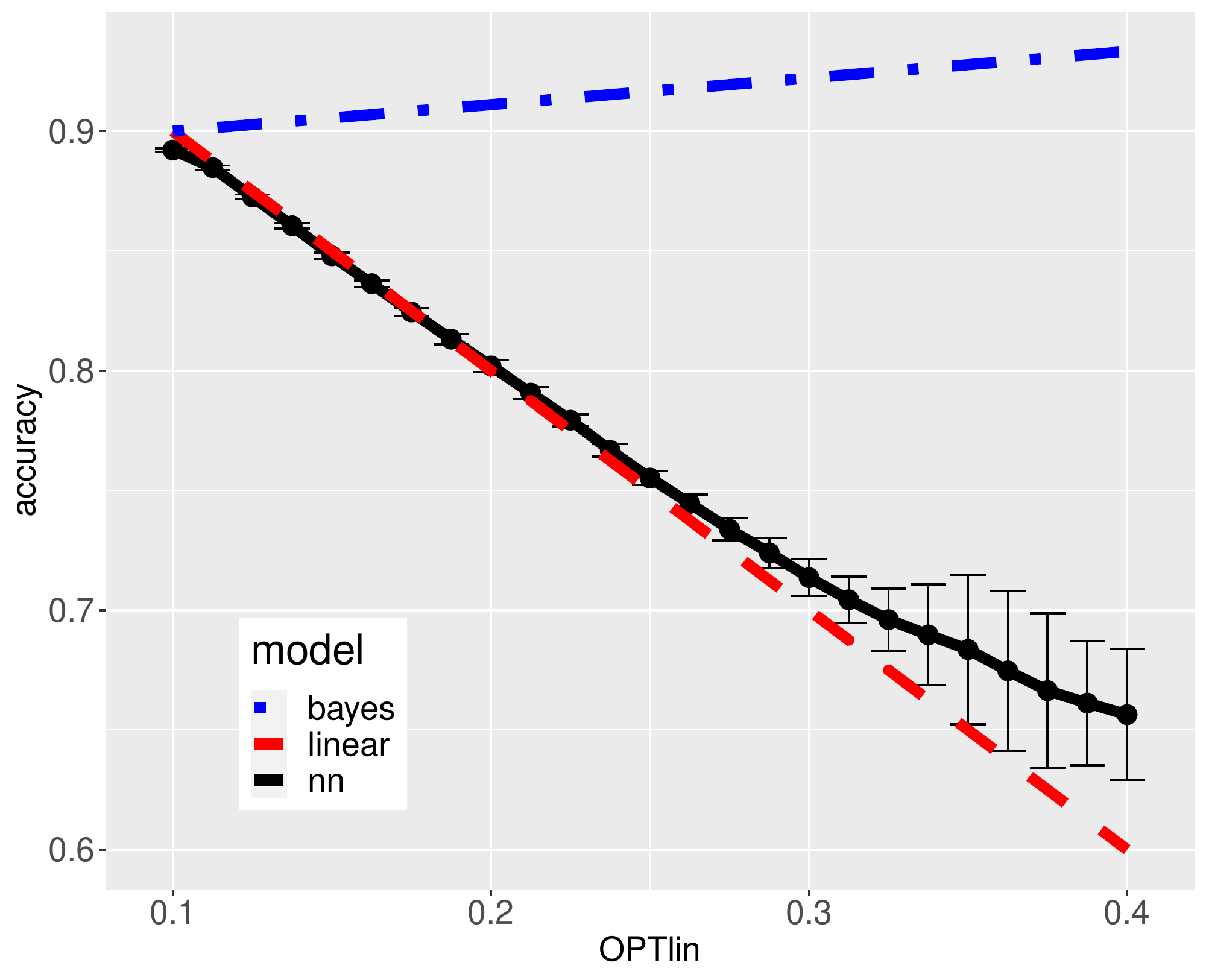}
         \caption{}
         \label{fig:results}
     \end{subfigure}
        \caption{(a) Samples from $\calD_{2.04,0.5}$ with random classification noise of 10\% on $\{|x_1|>2.04\}$ with the boundary term $b=2.04$ chosen so that $\optlin = 0.25$. Blue plus signs correspond to $y=+1$ and red circles to $y=-1$.  The contour plot displays the class probability for the output of a leaky ReLU network trained by online SGD and has dark hues when the neural network is more confident in its predictions.   (b) Test classification accuracy for data coming $\calD_{b,0.5}$.  The red dashed line is the accuracy of the best linear classifier, and the black solid line is the average accuracy of the neural network with error bars over ten random initializations of the first layer weights (experimental details can be found in Appendix \ref{appendix:experiments}). The blue dash-dotted line is the Bayes optimal classifier accuracy.} 
        \label{fig:both}
\end{figure*}

The boundary factor $b$ can be tweaked to incorporate more deterministic label noise which will affect the best linear classifier: if $b$ is larger, $\optlin$ is larger as well.  We give details on the precise relationship of $b$ and $\optlin$ in Appendix~\ref{appendix:experiments}.  But because this `noise' is deterministic, the best classifier over $\calD_{b,\gamma_0}$ (the Bayes optimal classifier) can always achieve accuracy of at least 90\% by using the decision rule
\begin{equation} \label{eq:bayes.rcn}
y_{\mathrm{Bayes}} = \begin{cases} +1, &x_1\in (-b, 0) \cup (b, \infty),\\
-1, & x_1 \in (-\infty, b] \cup [0, b].\end{cases} 
\end{equation}
Since the error for the Bayes decision rule corresponds to the region $\{|x_1| > b\}$ with random classification noise, we can exactly calculate the error for the Bayes classifier as well as $\optlin$.  As $b$ increases, the region with random classification noise becomes smaller, and thus the Bayes classifier gets better as the linear classifier becomes worse on $\calD_{b,\gamma_0}$.  This makes $\calD_{b,\gamma_0}$ a good candidate for understanding the performance of SGD-trained one-hidden-layer networks in comparison to linear classifiers.  Further, to our knowledge no previous work has been able to show that neural networks can provably generalize if the data distribution is $\calD_{b,\gamma_0}$.\footnote{There are two reasons that no other work can show generalization bounds in the settings we consider.  The first is the presence of adversarial label noise.  The second is that our generalization bound holds for neural networks with finite width and any initialization.  All previous works fail to allow at least one of these conditions.}

Since $\calD_{b,\gamma_0}$ is a subexponential hard margin distribution, Corollary \ref{corollary:hard.margin} shows that we can expect an SGD-trained leaky ReLU network on $\calD_{b,0.5}$ to achieve a test set accuracy of at least $1 - C\cdot \optlin \log(\nicefrac 1\optlin)$ for some constant $C \geq 1$.  We ran experiments on such a neural network with $m=1000$ neurons and learning rate $\eta=0.01$ and first layer weights initialized as independent normal random variables with variance $1/m$ (see Appendix \ref{appendix:experiments} for more details on the experiment setup).    In Figure \ref{fig:data} we plot the decision boundary for the SGD-trained neural network on the distribution $\calD_{2.04, 0.5}$, where $b=2.04$ is chosen so that $\optlin = 0.25$.  We notice that the decision boundary is almost exactly linear and is essentially the same as that of the best linear classifier $(x_1, x_2)\mapsto \sgn(x_1)$.  And in Figure \ref{fig:results}, we see that the neural network accuracy is almost exactly equal to $1-\optlin$ when $\optlin\leq 0.30$ and that the network slightly outperforms the best linear classifier when $\optlin>0.30$.  

In Appendix \ref{appendix:experiments} we conduct additional experiments to better understand whether this behavior is consistent across hyperparameter and architectural modifications to the network.  When using the bias-free networks of the form \eqref{eq:leaky.relu.def} we consider in this paper, we found that one-hidden-layer SGD-trained networks failed to generalize better than a linear classifier when using $\tanh$ activations (Figure~\ref{fig:tanh}), using different learning rates (Figure~\ref{fig:lr}), different initialization variances (Figure~\ref{fig:init.sd}), and using multiple-pass SGD rather than online SGD (Figure~\ref{fig:batch}).  On the other hand, we found that introducing bias terms can lead to decision boundaries closer to that of the Bayes-optimal classifier (Figure~\ref{fig:secondandbias}).  Interestingly, this behavior was strongly dependent on the initialization scheme used: when using an initialization variance of $1/m^4$, a linear decision boundary was consistently learned, while using an initialization variance of $1/m$ lead to approximately Bayes-optimal decision boundaries.  By contrast, the result we present in Theorem \ref{thm:online.sgd.leaky.relu} holds for \textit{arbitrary} initialization schemes.  This suggests that a new analytical approach would be needed in order to guarantee neural network generalization performance better than that of a linear classifier on $\calD_{\gamma_0,b}$.

\section{Discussion}
We have shown that overparameterized one-hidden-layer networks can generalize almost as well as the best linear classifier over the distribution for a broad class of distributions.  Our results imply two related but distinct insights on SGD-trained neural networks.  First, regardless of the initialization scheme and number of neurons, SGD training will produce neural networks that are competitive with the best linear predictor over the data, providing theoretical support for the hypothesis presented by~\citet{nakkiran2019sgd} that the performance of SGD-trained networks in the early stages of training can be explained by that of a linear classifier.  Second, a linearly separable dataset can be corrupted by adversarial label noise and overparameterized neural networks will still be able to generalize, despite the capacity to overfit to the label noise.

A number of extensions and open questions remain.  First, our analysis was specific to one-hidden-layer networks with the leaky-ReLU activation.  We are interested in extending our results to more general neural network architectures.  Second, a natural question is whether or not there are concept classes that are more expressive than halfspaces for which overparameterized neural networks can generalize for noisy data.   We are particularly keen on understanding this question for finite width neural networks that are not well-approximated by the NTK.  

\section*{Acknowledgements}
We thank James-Michael Leahy for a number of helpful discussions.  We thank Maria-Florina Balcan for pointing us to a number of works on learning halfspaces in the presence of noise.  

\appendix

\section{Proof of Lemma \ref{lemma:key.identity}}\label{appendix:key.identity.proof}
In this section we will prove a stronger version of Lemma \ref{lemma:key.identity} that holds for any increasing activation.
\begin{lemma}\label{lemma:general.key.identity}
Suppose that $\sigma$ is non-decreasing.  For $V$ defined in \eqref{eq:vdef}, for any $(x,y)\in \R^d \times \{ \pm 1\}$, for any $W\in \R^{m\times d}$, and any $\gamma\in (0,1)$:
\begin{align}\label{eq:key.identity}
   \nonumber &y\sip{\nabla f_x(W)}{V} \\
   &\geq  a \gamma m^{-1/2}  \Big[ 1 -  \ind(y \sip{v^*}x\in [0,\gamma)) - ( 1+\gamma^{-1}) |\sip{v^*}x| \ind(y\sip{v^*}x < 0)\Big] \summ j {m} \sigma'(\sip{w_j}x).
\end{align}
\end{lemma}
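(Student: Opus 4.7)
The plan is to compute $\langle \nabla f_x(W), V\rangle$ explicitly, exploit the sign cancellation that the definition of $V$ is engineered to produce, and thereby reduce the claim to a one-dimensional inequality in $s := y\langle v^*, x\rangle$ that I can dispatch by a case split on $s$.

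First, I would evaluate the gradient row-by-row: since $f_x(W) = \sum_j a_j \sigma(\langle w_j, x\rangle)$, the chain rule gives $\nabla_{w_j} f_x(W) = a_j \sigma'(\langle w_j, x\rangle)\, x$. Taking the Frobenius inner product with $V$, whose rows are $v_j = \tfrac{\mathrm{sgn}(a_j)}{\sqrt{m}} v^*$ as in \eqref{eq:vdef}, and using $a_j \cdot \mathrm{sgn}(a_j) = |a_j| = a$,
\[
\langle \nabla f_x(W), V\rangle = \sum_{j=1}^m a_j \sigma'(\langle w_j, x\rangle) \cdot \tfrac{\mathrm{sgn}(a_j)}{\sqrt{m}}\langle v^*, x\rangle = \frac{a \langle v^*, x\rangle}{\sqrt{m}}\sum_{j=1}^m \sigma'(\langle w_j, x\rangle).
\]
Multiplying by $y$ and writing $s := y\langle v^*, x\rangle$ and $A := \sum_j \sigma'(\langle w_j, x\rangle)$ then gives the identity $y\langle \nabla f_x(W), V\rangle = \tfrac{a s A}{\sqrt{m}}$. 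This is the crucial step of the argument: the coupling of the rows of $V$ with the random signs of the second-layer weights strips those signs out entirely and produces a clean linear dependence on $s$, which is exactly what makes the rest of the analysis tractable.

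Since $\sigma$ is non-decreasing, $A\geq 0$. When $A = 0$ both sides of the claim vanish; otherwise I divide by the common non-negative prefactor $\tfrac{aA}{\sqrt{m}}$ to reduce the lemma to the scalar inequality
\[
s \,\geq\, \gamma\bigl[1 - \ind(s\in[0,\gamma)) - (1+\gamma^{-1})|\langle v^*, x\rangle|\,\ind(s<0)\bigr].
\]
I then verify this by a three-way case split on $s$. When $s\geq \gamma$, the bracket equals $1$ and the inequality reduces to $s\geq \gamma$. When $s\in [0,\gamma)$, the bracket equals $0$ and the inequality reduces to $s\geq 0$. When $s<0$, I use $|\langle v^*,x\rangle| = |s|$ (since $y\in\{\pm 1\}$) to rewrite the bracket as $1 - (1+\gamma^{-1})|s|$ and verify the resulting linear inequality in $|s|$ by direct algebraic rearrangement.

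The main technical obstacle is the $s<0$ case: both sides are negative, so I have to do careful sign bookkeeping to ensure the slack term $(1+\gamma^{-1})|\langle v^*,x\rangle|$ absorbs the magnitude of the wrong-sign contribution $sA/\sqrt{m}$. This indicator-weighted slack is precisely the quantity whose expectation $\xi(\gamma)$ enters the Markov-type calculation in \eqref{eq:xi.def} and eventually yields the $\optlin$ factor in Theorem~\ref{thm:online.sgd.leaky.relu}, so its form here is dictated by what the downstream analysis demands rather than being an artifact of the computation.
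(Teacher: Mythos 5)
Your opening computation and the reduction are correct and coincide with the paper's starting point: indeed $y\langle \nabla f_x(W),V\rangle = a\, m^{-1/2}\, s \sum_{j=1}^m \sigma'(\langle w_j,x\rangle)$ with $s:=y\langle v^*,x\rangle$, and since the common factor $a\,m^{-1/2}\sum_j \sigma'(\langle w_j,x\rangle)$ is nonnegative, the claim is equivalent to the scalar inequality $s \geq \gamma - \gamma\,\ind(s\in[0,\gamma)) - (\gamma+1)|s|\,\ind(s<0)$. Your first two cases are fine. The genuine gap is the third case: the ``direct algebraic rearrangement'' you defer to does not go through. For $s<0$ the inequality reads $-|s| \geq \gamma - (\gamma+1)|s|$, which rearranges to $\gamma|s|\geq\gamma$, i.e.\ $|s|\geq 1$, and nothing in the hypotheses supplies that. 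Concretely, take $m=1$, $a=1$, $\sigma'(\langle w_1,x\rangle)=1$, $\gamma=\tfrac12$, and $y\langle v^*,x\rangle=-\tfrac14$: the left side is $-\tfrac14$ while the right side is $\tfrac12-\tfrac32\cdot\tfrac14=\tfrac18$. So the step you identified as the main technical obstacle is precisely where the argument breaks, and it cannot be repaired for the bracket as literally written.

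Had you carried out the sign bookkeeping, you would have found that the pointwise bound that actually holds (and that the paper's own chain of inequalities in Appendix~\ref{appendix:key.identity.proof} derives, up to its mis-stated final ``equality'') has bracket $1-\ind(s\in[0,\gamma))-\bigl(1+\gamma^{-1}|\langle v^*,x\rangle|\bigr)\ind(s<0)$, i.e.\ slack $\gamma+|\langle v^*,x\rangle|$ rather than $(\gamma+1)|\langle v^*,x\rangle|$ on the misclassified event; with that parenthesization your case $s<0$ reduces to $-|s|\geq -|s|$ and holds with equality. That corrected form is also the one consistent with how $\xi(\gamma)$ is expanded and used in the proof of Theorem~\ref{thm:online.sgd.leaky.relu}, namely $\xi(\gamma)=\phi_{v^*}(\gamma)+\optlin+\gamma^{-1}\E\bigl[|\langle v^*,x\rangle|\,\ind(y\langle v^*,x\rangle<0)\bigr]$, so the downstream results are unaffected. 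As submitted, however, your proof asserts a verification in the $s<0$ case that is false, so it is incomplete.
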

For $\sigma(z) = \max(\alpha z, z)$, we have $\summ j m \sigma'(\sip{w_j}x) \in [\alpha m, m]$, and hence the above implies Lemma~\ref{lemma:key.identity}:
\begin{align*}\nonumber &y\sip{\nabla f_x(W)}{V} \\
   &\geq  a \gamma m^{-1/2}  \Big[ \alpha m -  m\ind(y \sip{v^*}x\in [0,\gamma)) - m( 1+\gamma^{-1}) |\sip{v^*}x| \ind(y\sip{v^*}x < 0)\Big] \\
   &= a \gamma \sqrt m  \Big[ \alpha  -  \ind(y \sip{v^*}x\in [0,\gamma)) - ( 1+\gamma^{-1}) |\sip{v^*}x| \ind(y\sip{v^*}x < 0)\Big].
   \end{align*}
\begin{proof}[Proof of Lemma \ref{lemma:general.key.identity}]
By the definition of $V$ (see \eqref{eq:vdef}), we have
\begin{align*}
    y\sip{\nabla f_x(W)}V &= \summ j m a_j \sigma'(\sip{w_j}x) \sip{yv_j}x\\
    &= a m^{-1/2} \summ j m \sigma'(\sip{w_j}x) \sip{yv^*}x \\
    &= a m^{-1/2} \summ j {m} \sigma'(\sip{w_j}x) \sip{yv^*}x \Big[ \ind( y \sip {v^*}x \geq \gamma) + \ind(y \sip{v^*}x \in [0,\gamma)) + \ind(y \sip{v^*} x < 0)\Big].
    \end{align*}
The second line uses that $a_j v_j = |a_j| v^* = a v^*$.  Continuing, we have    
    
    \begin{align*}
    &y\sip{\nabla f_x(W)}V \\
    &\geq a \gamma  m^{-1/2} \ind(y \sip{v^*}x \geq \gamma) \cdot \summ j {m} \sigma'(\sip{w_j}x)\\
    &\quad + a  m^{-1/2} \summ j {m} \sigma'(\sip{w_j}x)\sip{yv^*}x \Big [\ind(y \sip{v^*}x \in [0,\gamma)) + \ind (y \sip{v^*}x < 0)\Big] \\
    &\geq a \gamma  m^{-1/2} \ind(y \sip{v^*}x \geq \gamma) \cdot \summ j {m} \sigma'(\sip{w_j}x) + a m^{-1/2}  \summ j {m} \sigma'(\sip{w_j}x)\sip{yv^*}x\ind (y \sip{v^*}x < 0) \\
    \nonumber
    &= a  \gamma  m^{-1/2} [1 - \ind(y \sip{v^*}x \in [0, \gamma))- \ind(y \sip{v^*}x < 0)] \summ j {m} \sigma'(\sip{w_j}x) \\
    &\quad + a  m^{-1/2} \summ j {m} \sigma'(\sip{w_j}x)\sip{yv^*}x\ind (y \sip{v^*}x < 0) \\
    \nonumber
    &\geq a \gamma  m^{-1/2} [1 - \ind(y \sip{v^*}x \in [0, \gamma)) - \ind(y \sip{v^*}x < 0)] \summ j {m} \sigma'(\sip{w_j}x) \\
    &\quad - a  m^{-1/2} |\sip{v^*}x| \ind(y \sip{v^*}x < 0) \summ j {m} \sigma'(\sip{w_j}x)\\
    &= a m^{-1/2} \Big[ \gamma - \gamma \ind(y \sip{v^*}x\in [0,\gamma)) - (\gamma + 1) |\sip{v^*}x| \ind(y\sip{v^*}x < 0)\Big] \summ j {m} \sigma'(\sip{w_j}x).
\end{align*}
The first and second inequalities use that $\sigma'(z)\geq 0$ and that $a>0$.  The third inequality uses that $x \geq -|x|$.  This proves \eqref{eq:key.identity}.
\end{proof}

\section{Additional Experiments and Experiment Details}\label{appendix:experiments}
In this section, we give details on the experiments given in Section \ref{sec:experiment}.  Let us first describe how we calculate $\optlin$ for $\calD_{b,\gamma_0}$.  To remind the reader, we begin by constructing $\calD_{b,\gamma_0}$ with a mixture of two independent Gaussians centered at $(-3, 0)$ and $(3,0)$ with independent unit variance components and then remove all data that has $x_1$ component in the interval $[-\gamma_0, \gamma_0]$.   We assign initial labels to be $-1$ if $x_1<0$ and $1$ if $x_1>0$.  For boundary factor $b>\gamma_0$, the deterministic adversarial label noise then assigns the label $1$ if $-b < x_1 < -\gamma_0$, and assigns the label $-1$ if $\gamma_0<x_1<b$. The final labels are determined by flipping labels for samples with $|x_1|>b$ with probability $p$ each. 

By construction, an optimal unit-norm halfspace classifier is given by the vector $(1,0)$, and this classifier is a hard-margin classifier with margin $\gamma_0>0$. The optimal halfspace classification error is given as the sum of two terms: (1) the random classification noise for the region $|x_1|>b$, and (2) the deterministic noise in the region $|x_1|<b$.   The error introduced from the deterministic, adversarial noise is the proportion of  2D Gaussian that has $x_1$ coordinate lying between $3-\gamma_0$ and $3-b$, conditioned on the fact that $x_1$ is at most $3-\gamma_0$.  We can directly calculate this as
\[ \mathrm{err}_{\mathrm{det}} = \frac{\mathbb{P}(3-b < N(0,1) \leq 3-\gamma_0) }{\mathbb{P}(N(0,1) \leq 3 - \gamma_0)} = \frac{\Phi(3-\gamma_0) - \Phi(3-b)}{\Phi(3-\gamma_0)},\]
where $\Phi$ is the standard normal cumulative distribution function.  Similarly, the error for the best linear classifier introduced by the random classification noise at rate $p$ is given by $p$ times the proportion of a 2D Gaussian that has $x_1$ coordinate smaller than $3-b$, conditioned on the $x_1$ coordinate being at most $3-\gamma_0$.  That is,
\begin{equation}\label{eq:err.rcn}
\mathrm{err}_{\mathrm{rcn}} = p \frac{\mathbb{P}(N(0,1) \leq 3-b)}{\mathbb{P}(N(0,1) \leq 3-\gamma_0)} = p\frac{\Phi(3-b)}{\Phi(3-\gamma_0)}.
\end{equation}
The total error for the optimal linear classifier is then given by
\begin{align*}\optlin &= \mathrm{err}_{\mathrm{det}} + \mathrm{err}_{\mathrm{rcn}} \\
&= \frac{1}{\Phi(3-\gamma_0)} \Big( \Phi(3 - \gamma_0) - \Phi(3-b) + p \Phi(3-b) \Big)\\
&= \frac{1}{\Phi(3-\gamma_0)} \Big( \Phi(3-\gamma_0) - (1-p) \Phi(3-b)\Big).\end{align*}
Solving for the boundary term in terms of $\optlin$ results in
\[ b = 3 - \Phi^{-1}\Bigg(\frac{1-\optlin}{1-p}  \Phi(3 - \gamma_0)\Big) \Bigg).\]
We then consider $\optlin$ in a grid and take the corresponding values of the boundary term $b$ to produce a distribution with hard margin $\gamma_0=0.5$ where the best population risk achieved by a linear classifier is $\optlin$.  We note that the Bayes-optimal classifier has decision rule given by \eqref{eq:bayes.rcn} with the Bayes risk equal to $\mathrm{err}_{rcn}$.  

The baseline neural network model we use, and the neural network used for Figure \ref{fig:results}, is as follows. We use a bias-free one-hidden-layer network~\eqref{eq:leaky.relu.def} with leaky ReLU activations (with $\alpha = 0.1)$ and $m=1000$ neurons with outer layer fixed at initialization with half of the $a_j$ equal to $+1/\sqrt{m}$ and the other half equal to $-1/\sqrt{m}$.  We initialize the hidden layer weights independently with normal random variables with variance $1/m$, so that $G_0^2 = \|\Wt 0\|_F^2 = O(1)$ with high probability (ignoring $d=2$ as a small constant).   We use online SGD (i.e. batch size one with a new sample used at each iteration) with $T =$ 20,000 samples\footnote{In ablation studies with $T=$ 100,000 samples, we observed no discernible difference in the classification accuracy, unless otherwise stated.} trained on the cross-entropy loss with fixed learning rate $\eta=0.01$.   We use a validation set of size 10,000 and evaluate performance on the validation set every 100 SGD iterations, and we take the model with the smallest validation error over the $T$ samples and evaluate its performance on a fresh test set (sampled independently from the training and validation sets) of 100,000 samples to produce the final test set accuracy.   We then repeat this experiment ten times for each level of $\optlin$ considered with the ten trials using different seeds for both the initialization of the first layer weights and for the sequence of data observed in online SGD (i.e. for fixed data $\{x_t\}_1^T$, we use a permutation $\pi :[T]\to [T]$ to permute the data $\{x_t\}_1^T\mapsto \{x_{\pi(t)}\}_1^T$).  We plot the average across the ten trials with error bars corresponding to one standard deviation in Figure \ref{fig:results}; in all subsequent modifications to this baseline neural network model, we will always plot the mean and error bars over the ten trials considered.  We calculate the Bayes-optimal classification error by using the boundary term corresponding to each value of $\optlin$ and plotting $\mathrm{err}_{\mathrm{rcn}}$ as the blue dash-dotted line in Figure \ref{fig:results}.   Code for our experiments is available on Github.\footnote{\href{https://github.com/spencerfrei/nn_generalization_agnostic_noise}{https://github.com/spencerfrei/nn\_generalization\_agnostic\_noise}}

In Figure \ref{fig:baseline.decision.boundary}, we show the decision boundary of the baseline neural network for $\optlin\in \{0.1, 0.25, 0.40\}$ for four independent initializations of the first layer weights.  For each level of $\optlin$, the neural network classifier has a nearly linear decision boundary.
\begin{figure}[h]
    \centering
    \includegraphics[width=\textwidth]{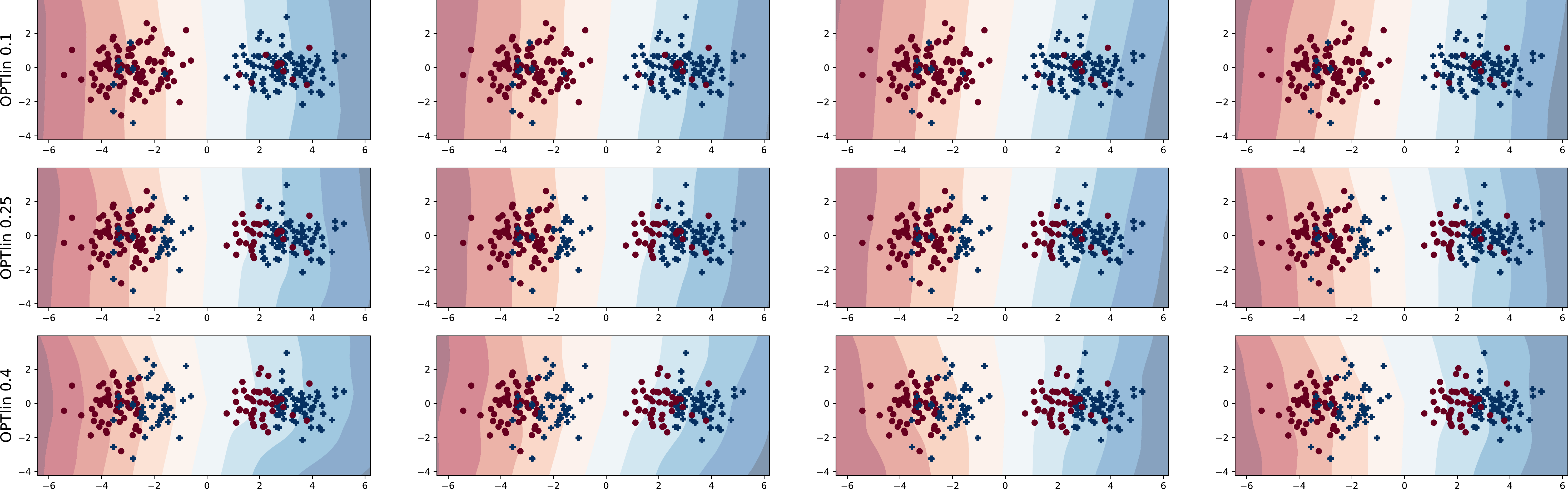}
    \caption{Decision boundary of an SGD-trained neural network on $\calD_{b,\gamma_0}$, where $b$ is chosen so that $\optlin \in \{0.1, 0.25, 0.40\}$, across four different random initializations.  The decision boundary is the line where the region changes from light red to light blue, and the dark regions are areas where the neural network classifier has the highest confidence.  Even in the presence of substantial, adversarial noise, the decision boundary is close to linear.} 
    \label{fig:baseline.decision.boundary}
\end{figure}

In Figure \ref{fig:tanh}, we modify the baseline neural network by having $\tanh$ activations instead of leaky ReLU.  Although $\tanh$ is highly nonlinear, the performance of $\tanh$ networks is essentially the same as the leaky ReLU network, and the decision boundaries are approximately linear even for large $\optlin$.  

\begin{figure}[h]
     \centering
     \begin{subfigure}[b]{0.68\textwidth}
         \centering
         \includegraphics[width=\textwidth]{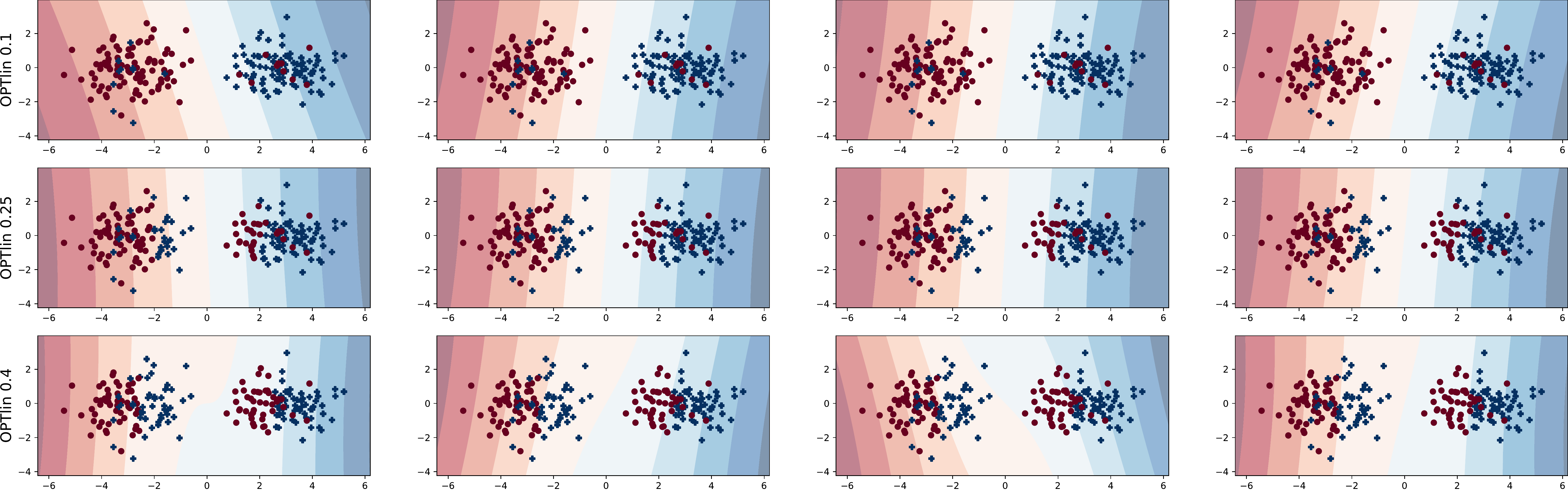}
         \caption{}
         \label{fig:tanh.decision}
     \end{subfigure}
     \hfill
     \begin{subfigure}[b]{0.3\textwidth}
         \centering
         \includegraphics[width=\textwidth]{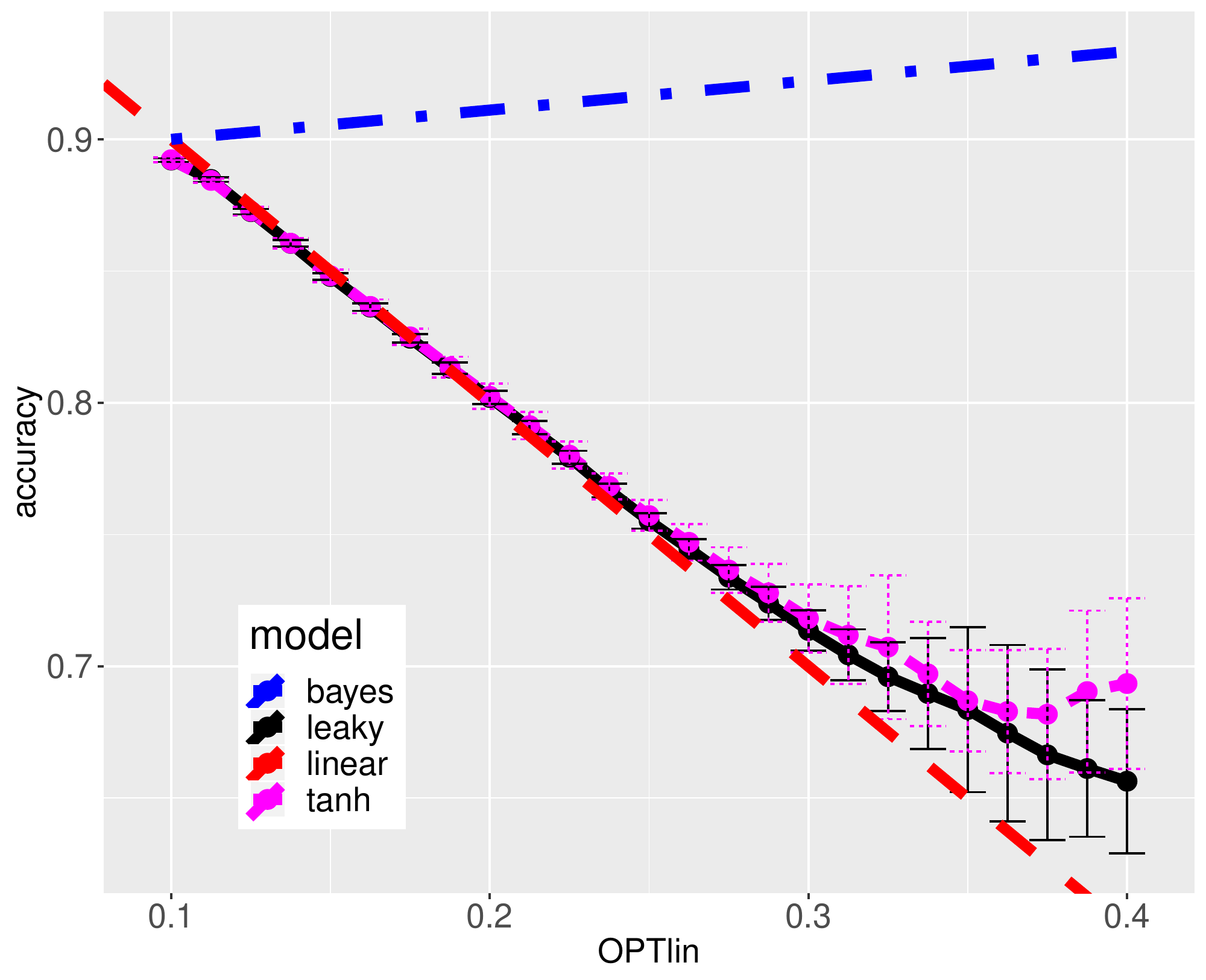}
         \caption{}
         \label{fig:tanh.results}
     \end{subfigure}
        \caption{(a)  Decision boundary for the same setup as the baseline neural network except with $\tanh$ activations. Columns correspond to different random initializations.  Compare with Figure \ref{fig:baseline.decision.boundary}.  Even for nonlinear activations we still see an almost perfectly linear decision boundary for $\optlin = 0.25$.   (b) Test classification accuracy.  The performance of leaky ReLU and tanh networks are almost exactly the same and match the performance of the best linear predictor until extreme levels of noise.} \label{fig:tanh}
\end{figure}

In Figure \ref{fig:lr}, we consider variations of the learning rate from the baseline $\eta=0.01$ to $\eta \in \{0.1, 0.001\}$.  Overall, the test accuracy is essentially the same, albeit of smaller variance across initializations when the learning rate is smaller.   When the learning rate is smaller, the decision boundary is almost perfectly linear, even when $\optlin=0.4$.  When $\eta=0.1$, the decision boundary changes significantly for different initializations of the first layer weights, resulting in a higher variance for the test accuracy, but the decision boundary is still a rough perturbation of the best linear classifier decision boundary.   

In Figure \ref{fig:init.sd}, we examine the effect of modifying the initialization of the first layer weights from the baseline variance of $1/m$ to $\mathrm{Var}(w_{i,j}^{(0)}) \in \{m^{-2}, 1\}$.   The overall accuracy is essentially the same across initialization variances.  The decision boundary becomes more smooth and linear when the variance is smaller. When the variance is larger, the decision boundary is more disjointed and nonsmooth, but is still roughly a perturbation of the best linear classifier decision boundary.

In Figure \ref{fig:batch}, we consider the modification of using 100 epochs of multiple-pass SGD with batch size 32.  All other architectural and optimization hyperparameters from the baseline case are the same.  We see that the decision boundary and test accuracy has less variance across random initializations, which we interpret as being due to the averaging effect of increasing the batch size from 1 to 32.  The test classification accuracy is virtually indistinguishable from the online SGD case.

In Figure \ref{fig:secondandbias}, we consider two modifications to the neural network: (1) increasing the width from the baseline of $m=10^3$ to $m=10^5$, and (2) introducing trainable bias terms and training the second layer weights.  The difference in (1) is imperceptible and so we do not plot the decision boundary in this case.   On the other hand, we observed that with trainable biases and second layer weights, the neural network can come close to Bayes-optimal classifier accuracy provided the initialization variance is chosen appropriately.  In particular, with an initialization variance of $1/m$, the network is able to learn a nonlinear decision boundary, but with an initialization variance of $1/m^4$, the network only learns a linear decision boundary.\footnote{For the experiments involving trainable biases and second layer weights, we increased the sample size from $T=$ 20,000 to $T=$ 100,000 since the validation accuracy was still continuing to increase with $T=$ 20,000 for the initialization variance of $1/m$.  This was the only set of experiments where we noticed such behavior.}  We note that our result in Theorem \ref{thm:online.sgd.leaky.relu} holds for \textit{any} initialization, and thus these experiments suggest that we would need to introduce new analyses in order to get generalization performance much better than a linear classifier.  Additionally, these experiments suggest that the ability of an SGD-trained network to generalize better than a linear classifier on $\calD_{\gamma_0,b}$ is strongly dependent upon the initialization scheme used and the usage of bias terms.

As a final study on $\calD_{b,\gamma_0}$, we consider a three-hidden-layer fully connected network of the form
\begin{equation}
    x^{(1)} = \sigma(W^{(1)}x),\quad x^{(l)} = \sigma(W^{(l)} x^{(l-1)} ),\, l=2,3,\quad f_x(\vec W, \vec b) = a^\top x^{(3)},\label{eq:4layer}
\end{equation}
where $W^{(1)}\in \R^{m\times d}$, $W^{(l)}\in \R^{m\times m}$ for $l=2, 3$,  $a\in \R^{m\times 1}$ are all trainable weights, and $\sigma$ is again the leaky ReLU with $\alpha=0.1$.  In Figure \ref{fig:4layer}, we plot the decision boundary and accuracy for this four layer network (with $m=100$) with each layer's weights initialized with variance $1/m$ and the final layer weights initialized at $\pm 1/\sqrt{m}$ and the same learning rate of $0.01$.  This network is able to learn a better partition of the input space and is able to generalize almost as well as the Bayes optimal classifier, enjoying the same trend of increase in performance as $\optlin$ increases that holds for the Bayes optimal classifier.   This experiment suggests that although there is evidence that bias-free one-hidden-layer networks fail to learn $\calD_{b,\gamma_0}$ up to an accuracy better than a linear classifier, bias-free networks with multiple hidden layers can.

We also conducted a series of experiments to emphasize that although it seems that bias-free SGD-trained one-hidden-layer networks cannot learn $\calD_{b,\gamma_0}$ to an accuracy better than a linear classifier, there are simple distributions for which such networks easily outperform linear predictors.  We construct a distribution $\tilde \calD_b$ as follows.  We introduce a boundary factor $b>0$ and sample an isotropic 2D Gaussian, and then assign the label $+1$ if $x_2 < b |x_1|$, and the label $-1$ otherwise.  Every (bias-free) halfspace for the marginal distribution of a 2D Gaussian partitions any circle centered at the origin into two equal-sized halves.   By symmetry of the isotropic Gaussian, this means the best halfspace will have error exactly equal to the proportion of $+1$ lying in the region with $0 < x_2 \leq b|x_1|$.    If we denote the angle corresponding to the region $\{x_2 \geq b|x_1|\}$ where $y=-1$ as $2\theta$, then this means the error of the best linear classifier is given by $\optlin = \frac{\pi - 2\theta}{2\pi} = \nicefrac 12 - \nicefrac \theta \pi$ (see Figure \ref{fig:abs.distribution}).  The angle $\theta \in [0,\nicefrac \pi 2]$ is given by $\theta = \arctan(1/b)$, and thus we can solve for $\optlin$ in terms of $b$.  When $b \to 0$, the error for the best halfspace converges to $0$, while as $b\to \infty$ we have $\optlin \to \nicefrac 12$.  The Bayes classifier achieves accuracy $100\%$ with the decision rule $y_{\mathrm{Bayes}}=1$ if $x_2 < b |x_1|$ and $-1$ otherwise.  

The 2D Gaussian satisfies $1$-anti-concentration and Corollary \ref{corollary:anti.concentration} guarantees that an SGD-trained neural network will achieve a test set accuracy of at least $1-\tilde \Omega(\sqrt{\optlin})$.  We see in Figure \ref{fig:abs} that the neural network performs quite a bit better than the best linear classifier (and significantly better than $1-\sqrt{\optlin}$), with the decision boundary notably nonlinear and attuned to the distribution of the data.  In summary, one-hidden-layer bias-free leaky ReLU networks trained by SGD can learn nonlinear decision boundaries, but apparently not the type of decision boundary necessary to outperform linear classifiers on $\calD_{b,\gamma_0}$. 

\begin{figure}[ht!]
     \centering
     
     \begin{subfigure}[b]{\textwidth}
         \centering
         \includegraphics[width=0.5\textwidth]{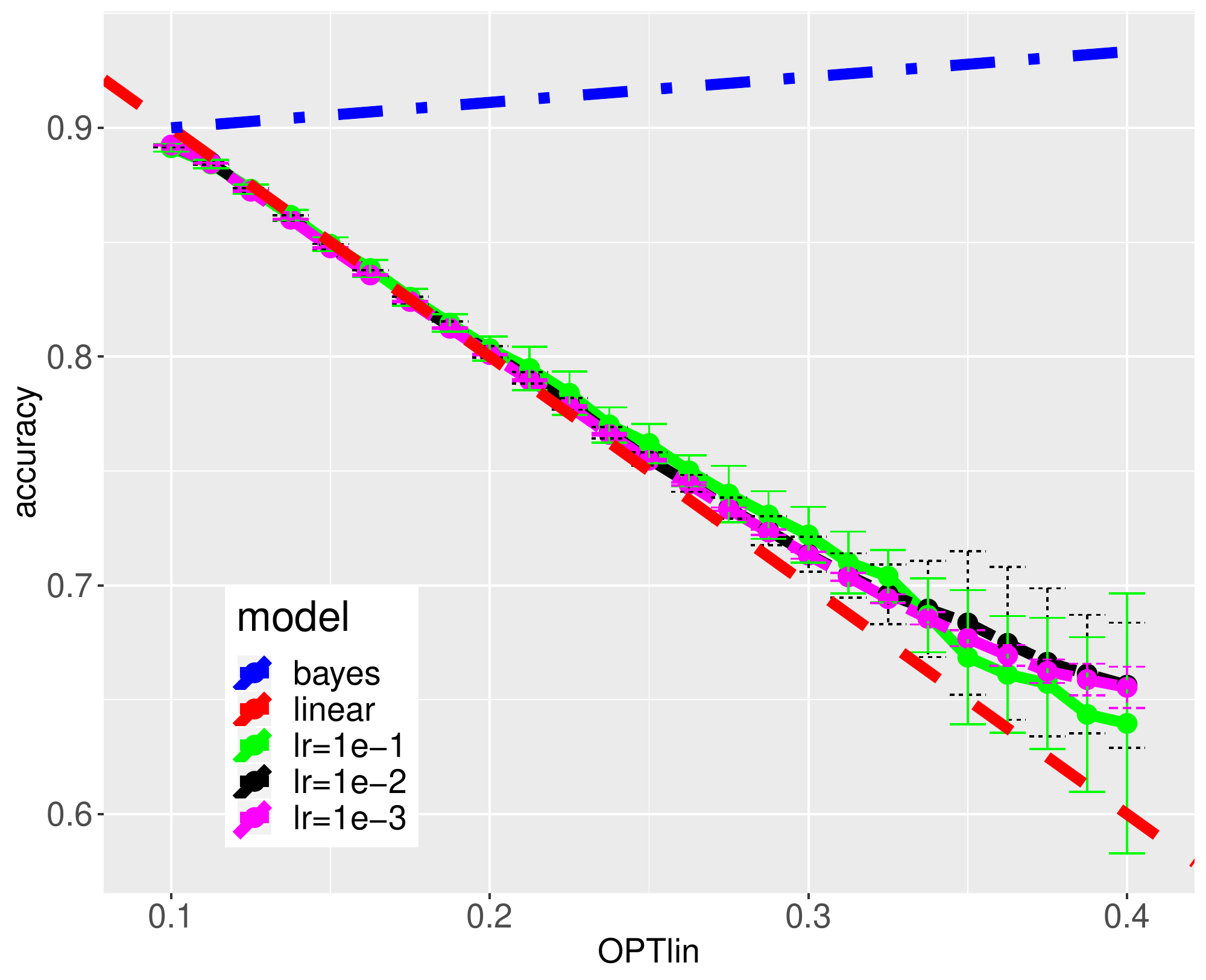}
         \caption{}
         \label{fig:lr0.001.results}
     \end{subfigure}
     \begin{subfigure}[b]{\textwidth}
         \centering
         \includegraphics[width=0.9\textwidth]{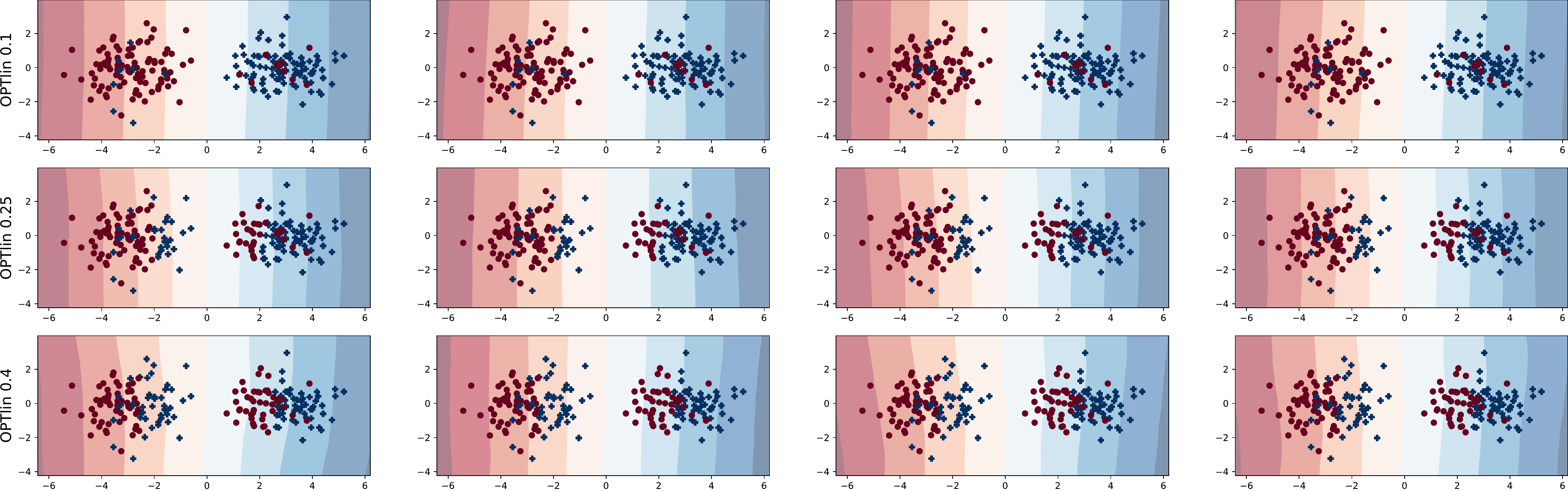}
         \caption{}
         \label{fig:lr0.001.decision.boundary}
     \end{subfigure}
     \begin{subfigure}[b]{\textwidth}
         \centering
         \includegraphics[width=0.9\textwidth]{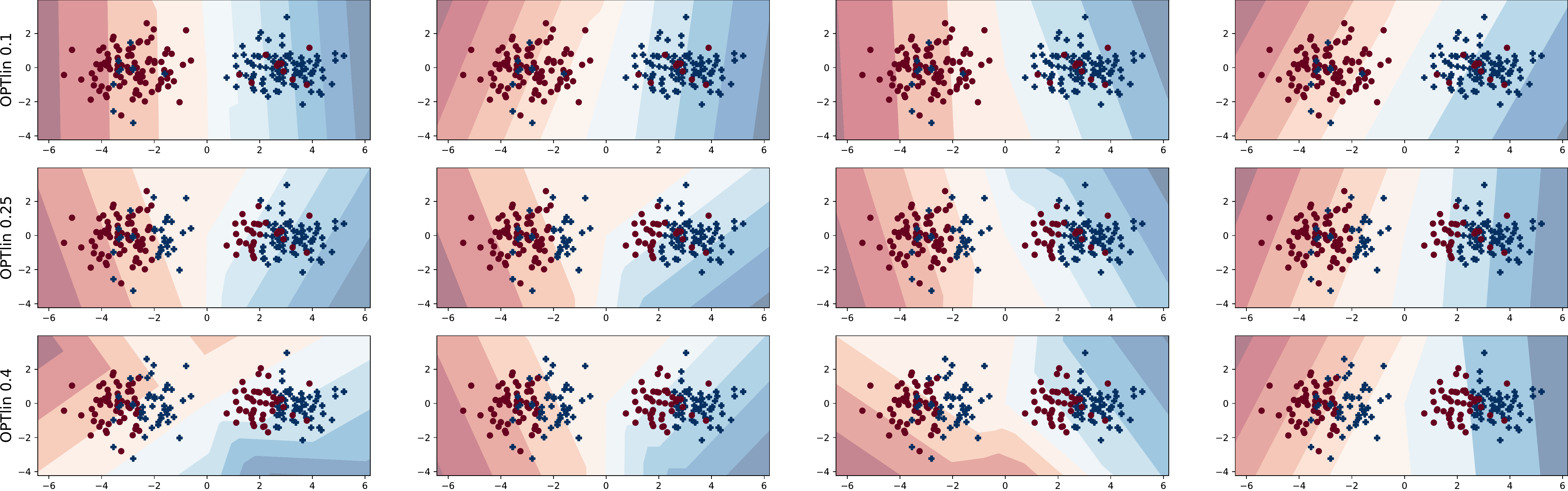}
         \caption{}
         \label{fig:lr0.1.decision.boundary}
     \end{subfigure}
        \caption{(a) Test classification accuracy for learning rates $\eta=0.1$ and $\eta=0.001$ compared to baseline $\eta=0.01$.  Large learning rates lead to a larger variance in performance.  (b) Decision boundary for $\eta = 0.001$ is consistently linear.  (c) Decision boundary for $\eta=0.1$ varies over initializations but is roughly a perturbation of the linear classifier decision boundary.} \label{fig:lr}
\end{figure}

\begin{figure}[ht!]
     \centering
     
     \begin{subfigure}[b]{\textwidth}
         \centering
         \includegraphics[width=0.5\textwidth]{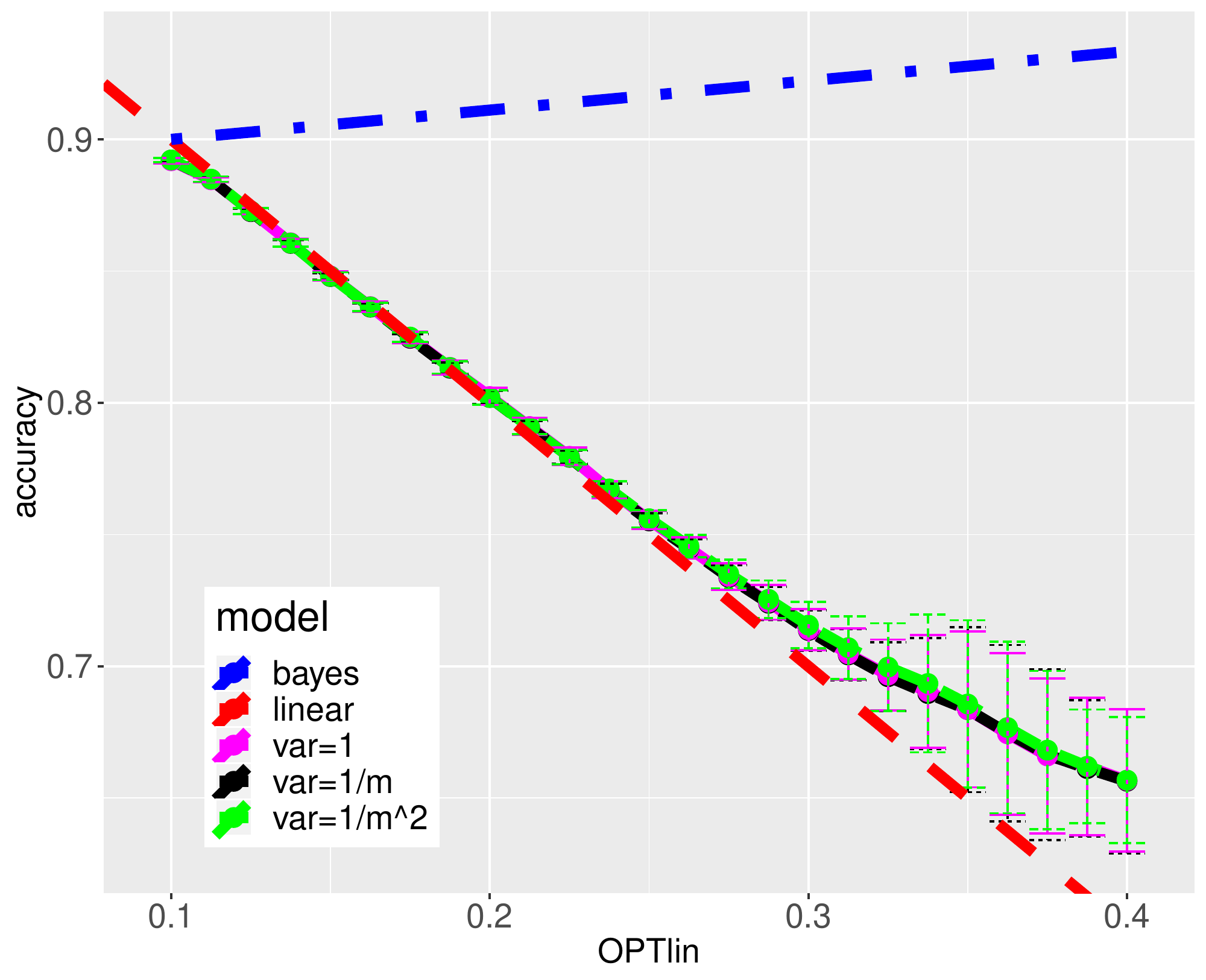}
         \caption{}
         \label{fig:init.sd.results}
     \end{subfigure}
     \begin{subfigure}[b]{\textwidth}
         \centering
         \includegraphics[width=0.9\textwidth]{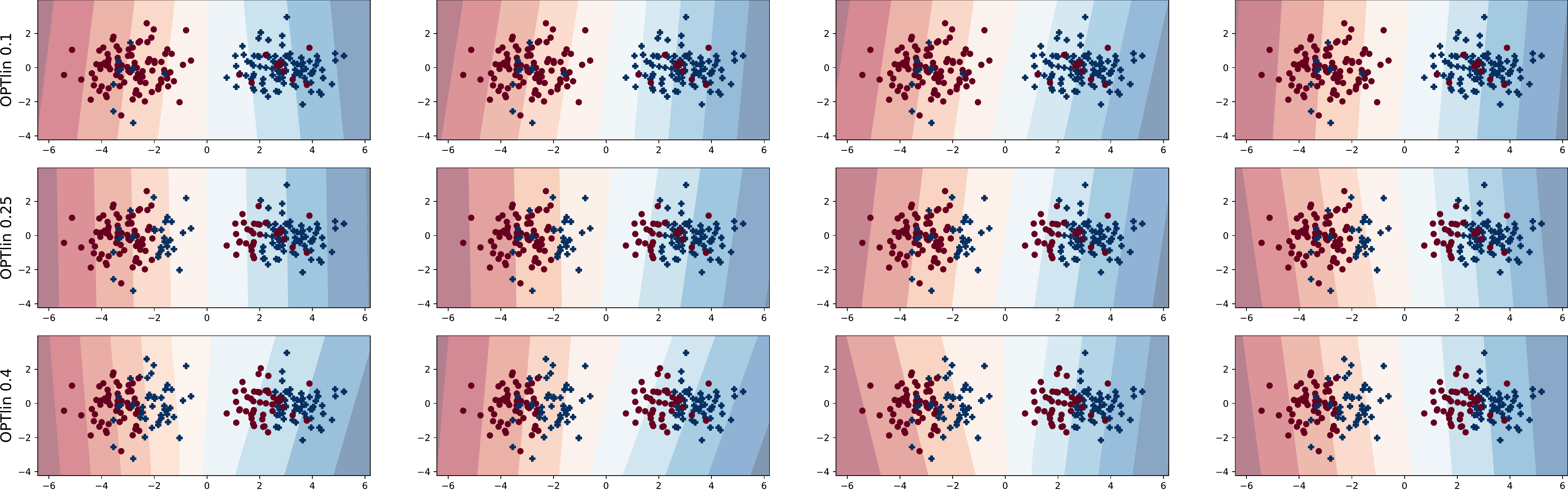}
         \caption{}
         \label{fig:stdm-1.decision.boundary}
     \end{subfigure}
     \begin{subfigure}[b]{\textwidth}
         \centering
         \includegraphics[width=0.9\textwidth]{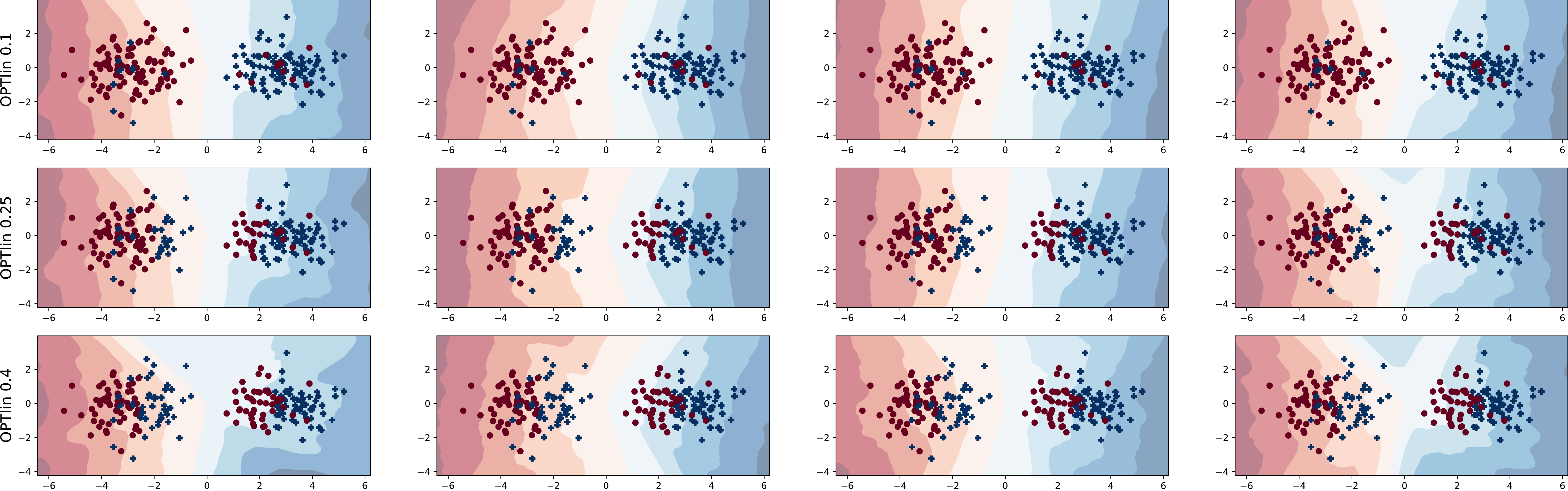}
         \caption{}
         \label{fig:std1.decision.boundary}
     \end{subfigure}
        \caption{(a) Test classification accuracy for different values of the variance of the first layer weight initialization.  The baseline neural network has variance $1/m$.     (b) Decision boundary for the smaller variance $1/m^2$ is more consistently linear.  (c) Decision boundary for variance $1$ has more variation across random initializations, but are roughly perturbations of the linear classifier decision boundary.} \label{fig:init.sd}
\end{figure}

\begin{figure}[h]
     \centering
       \begin{subfigure}[b]{0.5\textwidth}
         \centering
         \includegraphics[width=\textwidth]{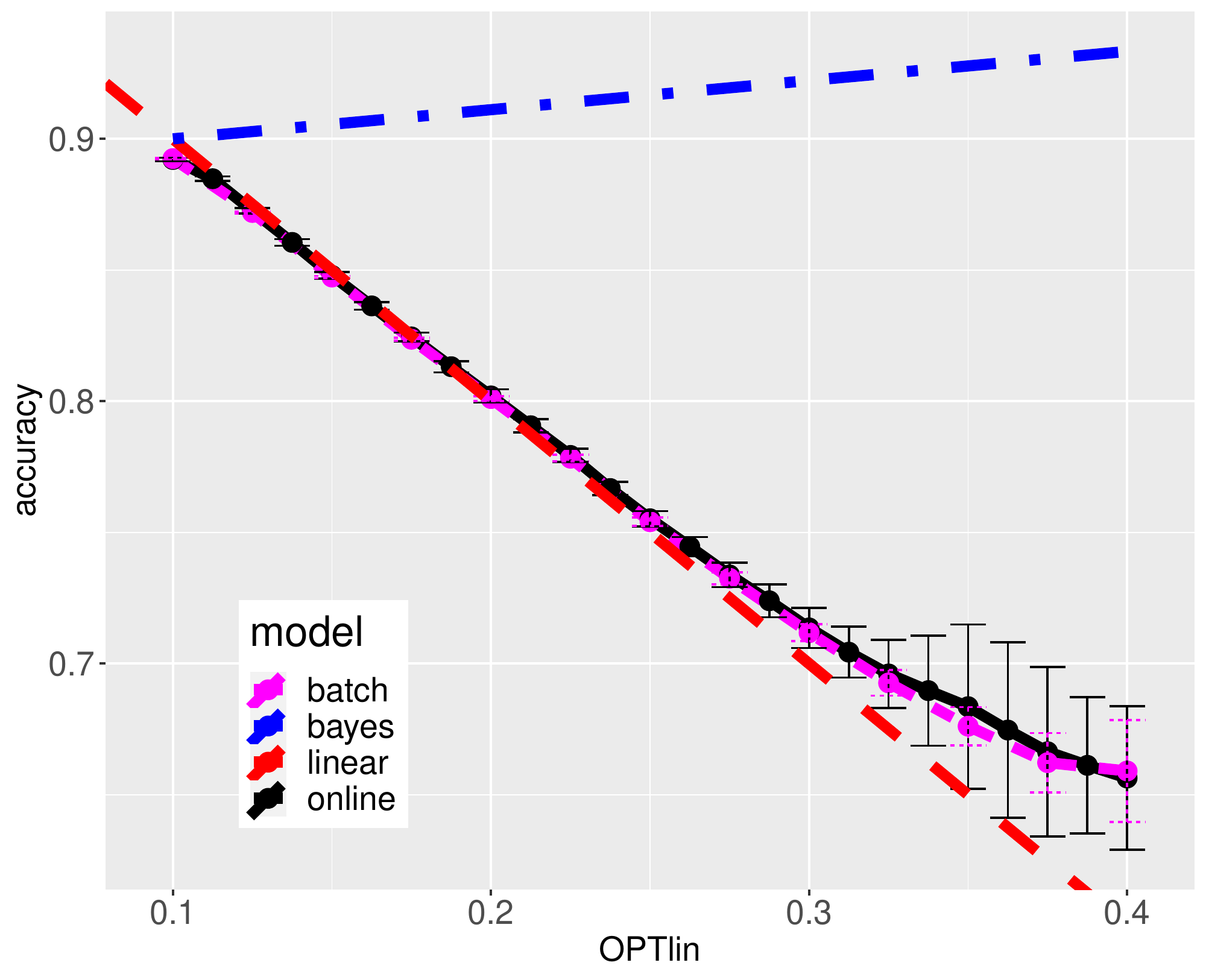}
         \caption{}
     \end{subfigure}
     \begin{subfigure}[b]{0.9\textwidth}
         \centering
         \includegraphics[width=\textwidth]{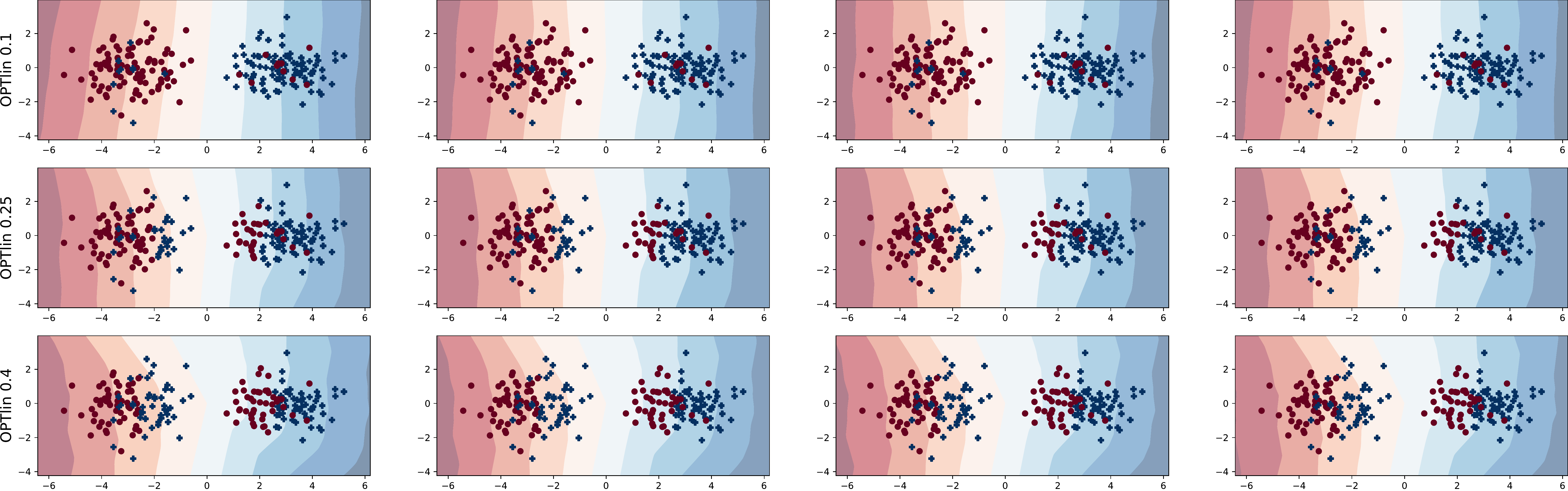}
         \caption{}
     \end{subfigure}
     \hfill
  
        \caption{(a) Test classification accuracy.  The differences with online SGD are essentially indistinguishable. (b)  Decision boundary when using 100 epochs multiple-pass SGD of batch size 32.  Columns correspond to different random initializations.  The decision boundary is more consistent across randomizations than the baseline online SGD algorithm.} \label{fig:batch}
\end{figure}

\begin{figure}[h]
     \centering
          \begin{subfigure}[b]{0.5\textwidth}
         \centering
         \includegraphics[width=\textwidth]{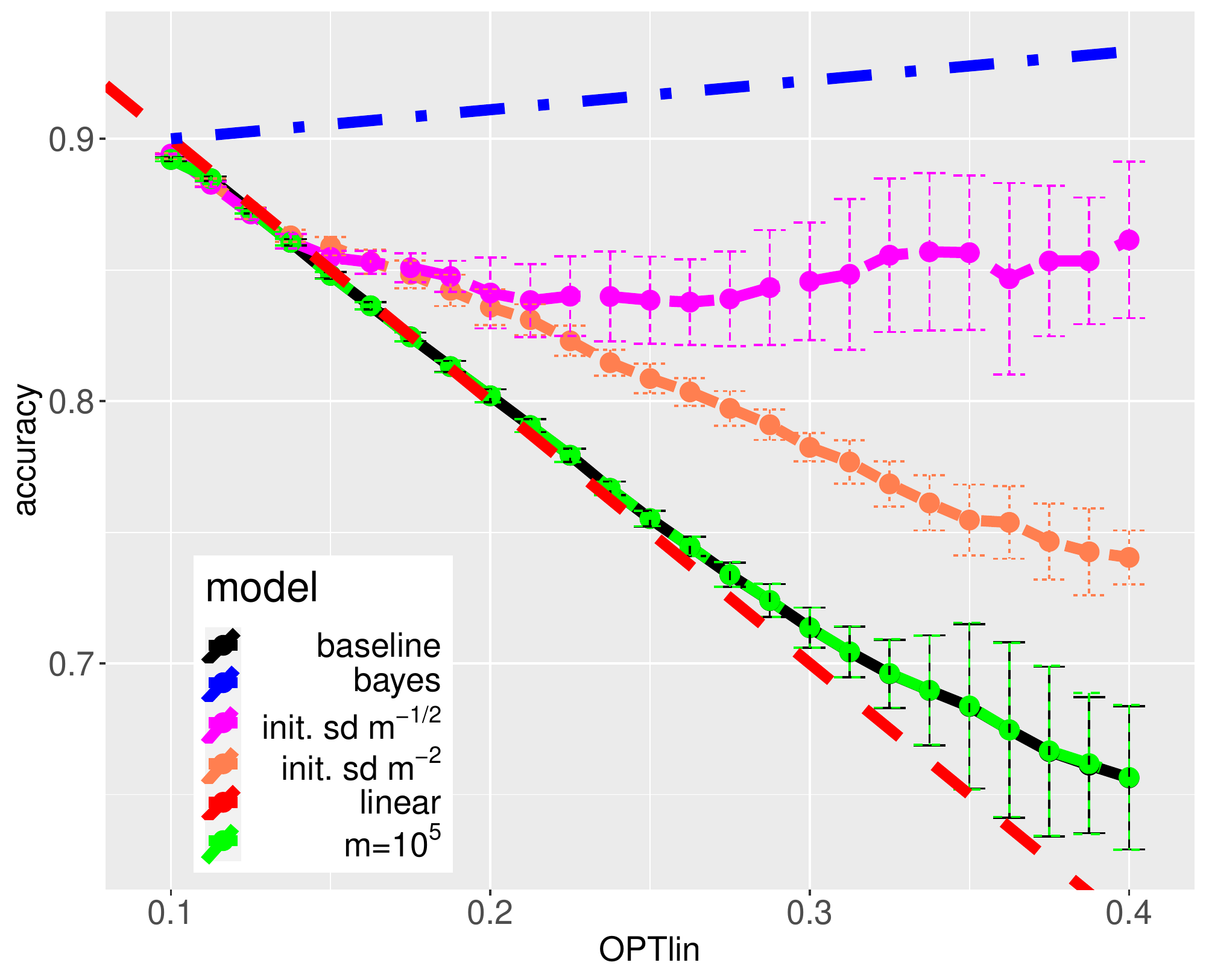}
         \caption{}
         \label{fig:secondandbias.results}
     \end{subfigure}
     \begin{subfigure}[b]{0.9\textwidth}
         \centering
         \includegraphics[width=\textwidth]{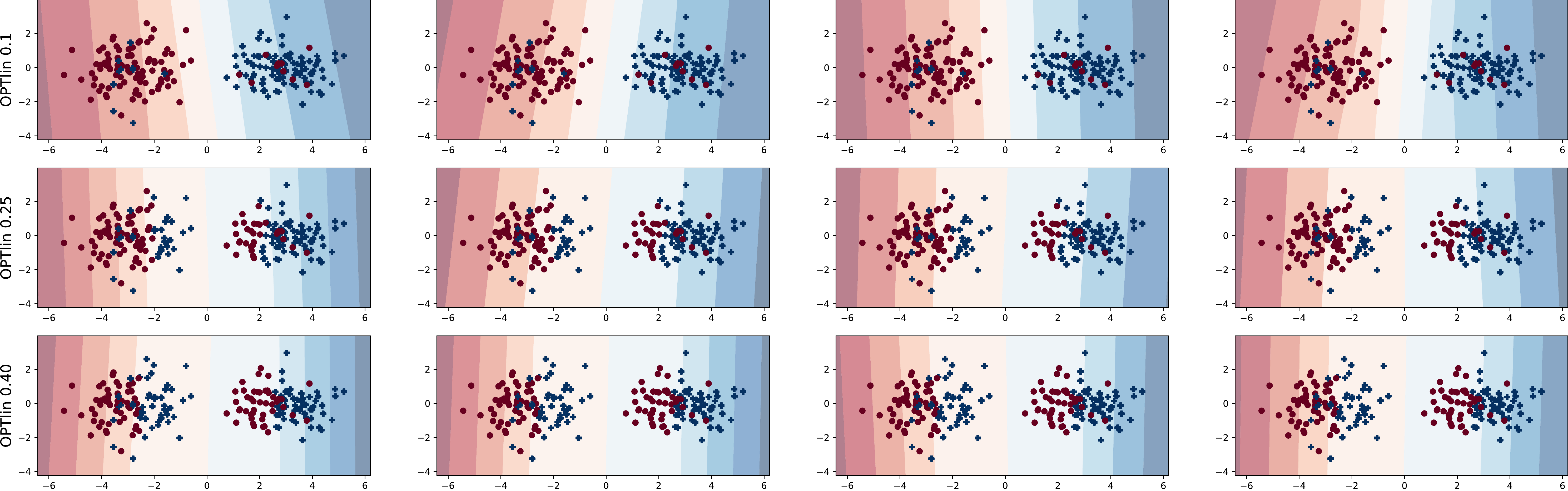}
         \caption{}
         \label{fig:secondandbias.decision.boundary.m-2}
     \end{subfigure}
     \begin{subfigure}[b]{0.9\textwidth}
         \centering
         \includegraphics[width=\textwidth]{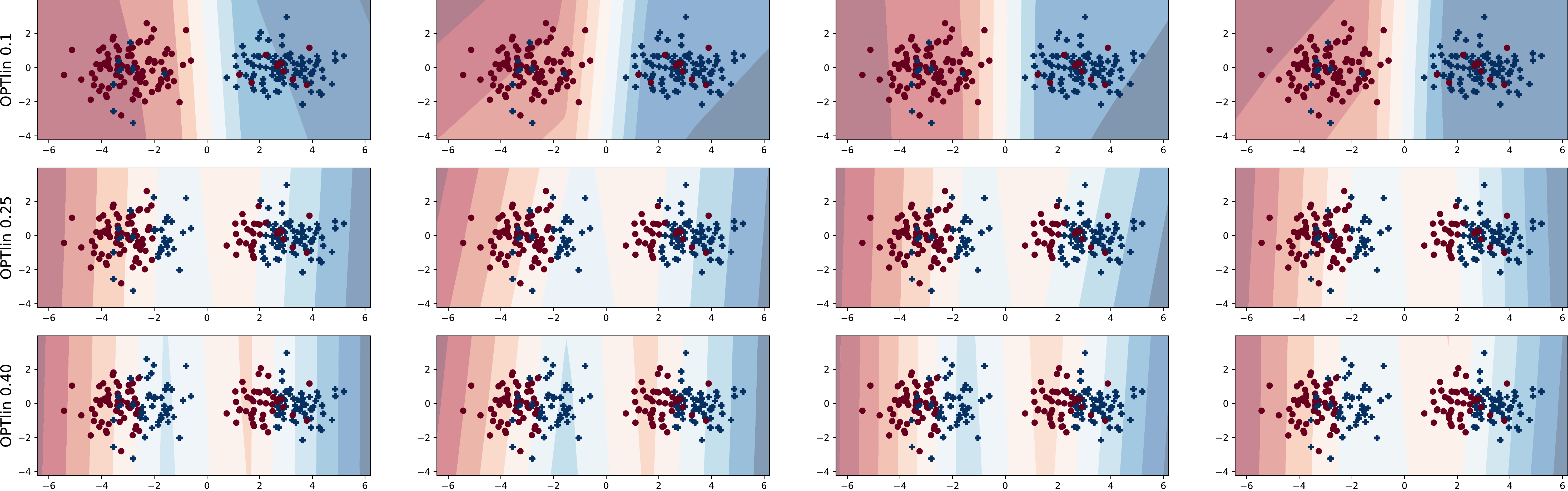}
         \caption{}
         \label{fig:secondandbias.decision.boundary.m-0.5}
     \end{subfigure}
        \caption{(a)  Test classification accuracy when introducing bias terms and trainable second layer weights (pink and coral dashed lines) as well as when increasing the width from $m=$ 1,000 to $m=$ 100,000 (green line).  The pink dashed line uses an initialization variance of $1/m$ while the coral dashed line uses an initialization variance of $1/m^4$.  Note that the performance of a neural network with width $m=$ 1,000 and width $m=$ 100,000 is imperceptible.  With trainable bias and second layer weights, the accuracy of the network varies significantly based on the initialization scheme.  Note that our result (Theorem \ref{thm:online.sgd.leaky.relu}) holds for an arbitrary initialization.   (b) Decision boundary when using trainable biases and second layer weights with an initialization variance of $1/m^4$.  The boundary is almost exactly linear.  (c) Same as (b) but using an initialization variance of $1/m$.  Here, the network can learn the appropriate nonlinear decision boundary.} \label{fig:secondandbias}
\end{figure}

\begin{figure}[ht!]
     \centering
     \begin{subfigure}[b]{0.68\textwidth}
         \centering
         \includegraphics[width=\textwidth]{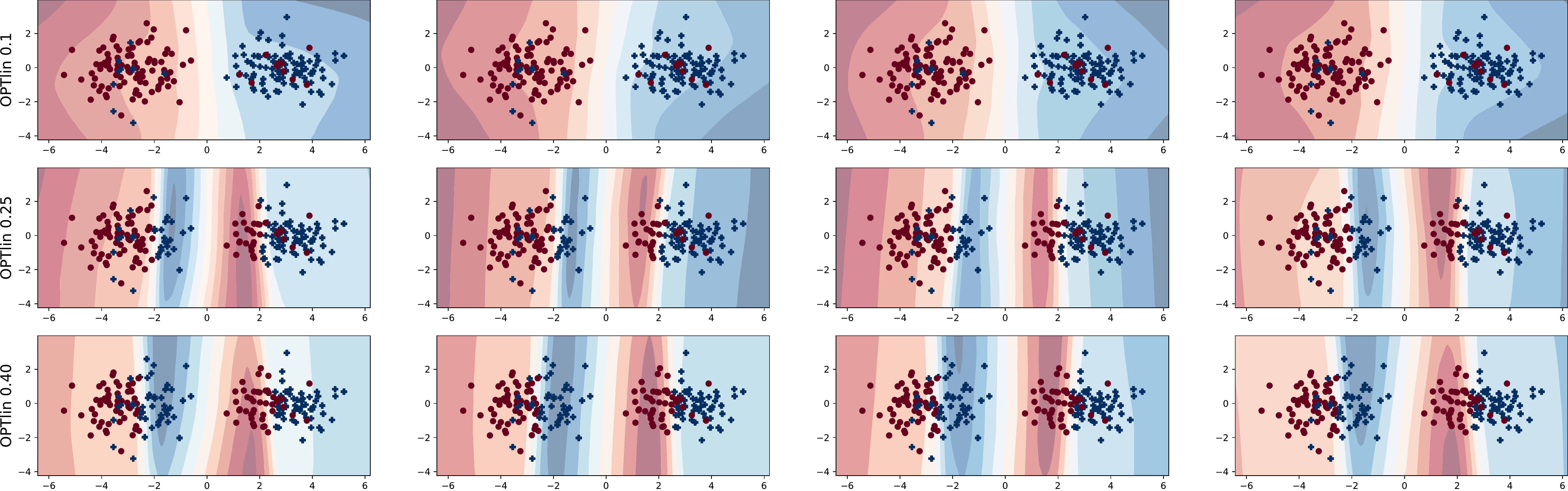}
         \caption{}
         \label{fig:4layer.decision.boundary}
     \end{subfigure}
     \hfill
     \begin{subfigure}[b]{0.3\textwidth}
         \centering
         \includegraphics[width=\textwidth]{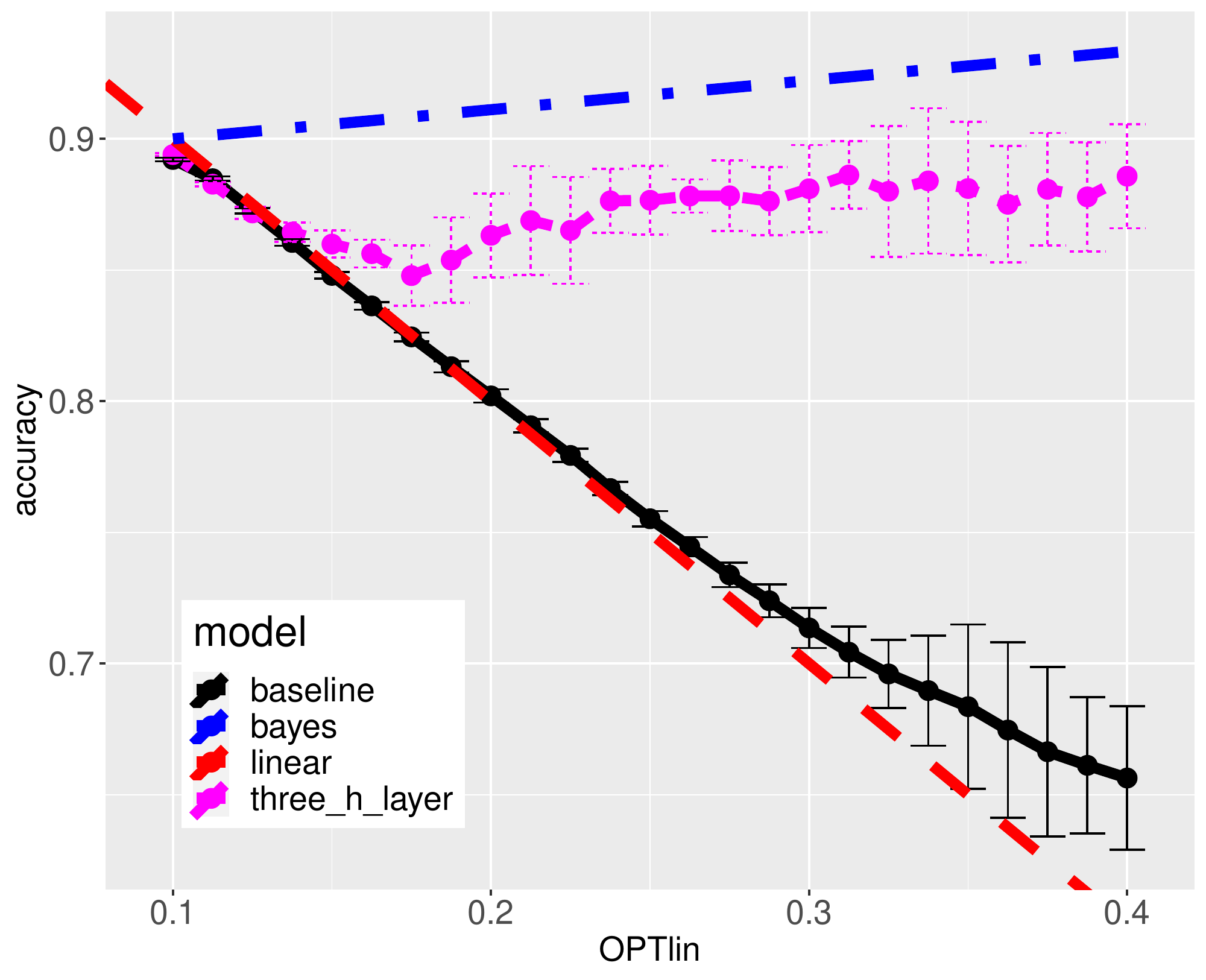}
         \caption{}
         \label{fig:4layer.results}
     \end{subfigure}
        \caption{(a)  Decision boundary for four layer residual network given in~\eqref{eq:4layer}. Columns correspond to different random initializations.  Compare with Figure \ref{fig:baseline.decision.boundary}.  With four layers, the network is able to appropriately partition the input space and generalize well.   (b) Test classification accuracy using the four layer network.  The four layer network accuracy is larger for $\optlin=0.4$ than it is for $\optlin=0.15$, a behavior closer to that of the Bayes classifier.} \label{fig:4layer}
\end{figure}

\begin{figure}[h!]
  \centering
  \includegraphics[width=0.25\textwidth]{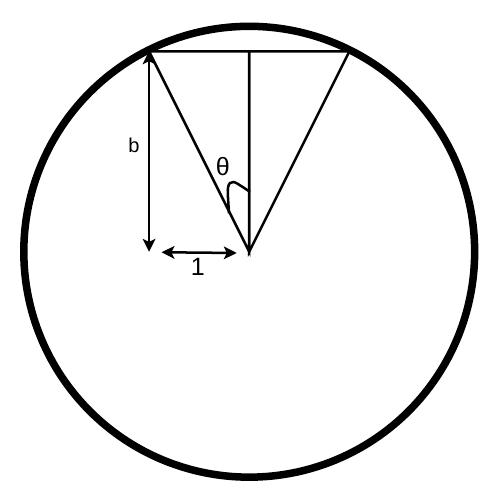}
  \caption{Calculation of the angle $2\theta$ for the distribution $\tilde \calD_{b}$ corresponding to the region $\{x_2 > b |x_1|\}$.}\label{fig:abs.distribution}
\end{figure}

\begin{figure}[ht!]
     \centering
     \begin{subfigure}[b]{0.68\textwidth}
         \centering
         \includegraphics[width=\textwidth]{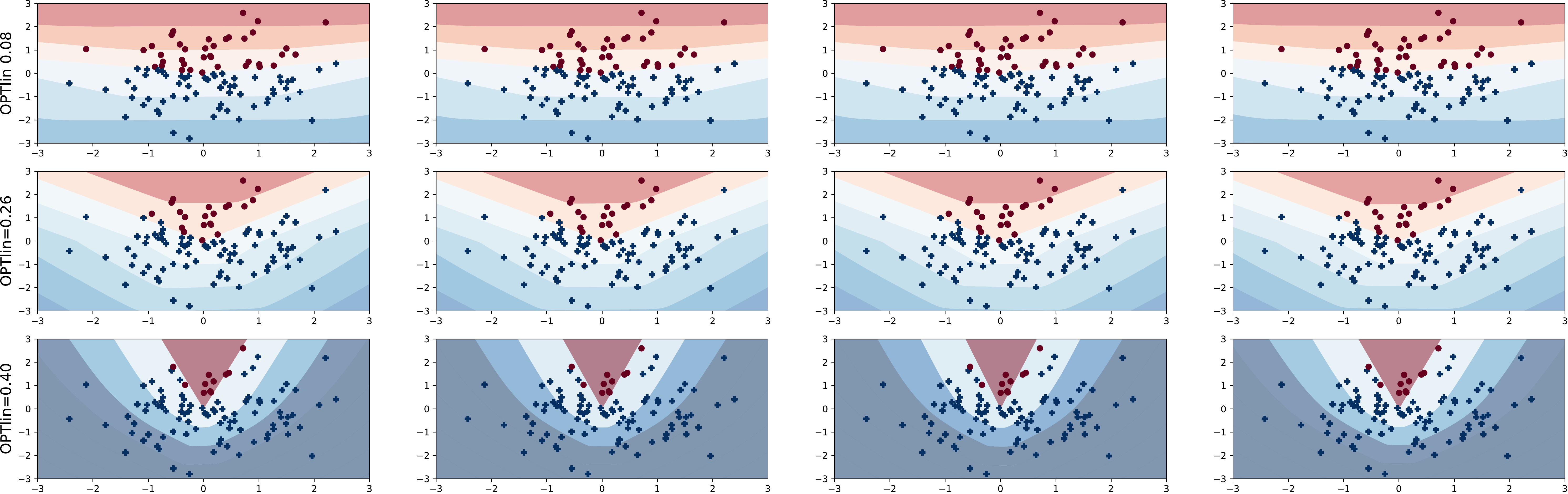}
         \label{fig:abs.decision}
     \end{subfigure}
     \hfill
     \begin{subfigure}[b]{0.3\textwidth}
         \centering
         \includegraphics[width=\textwidth]{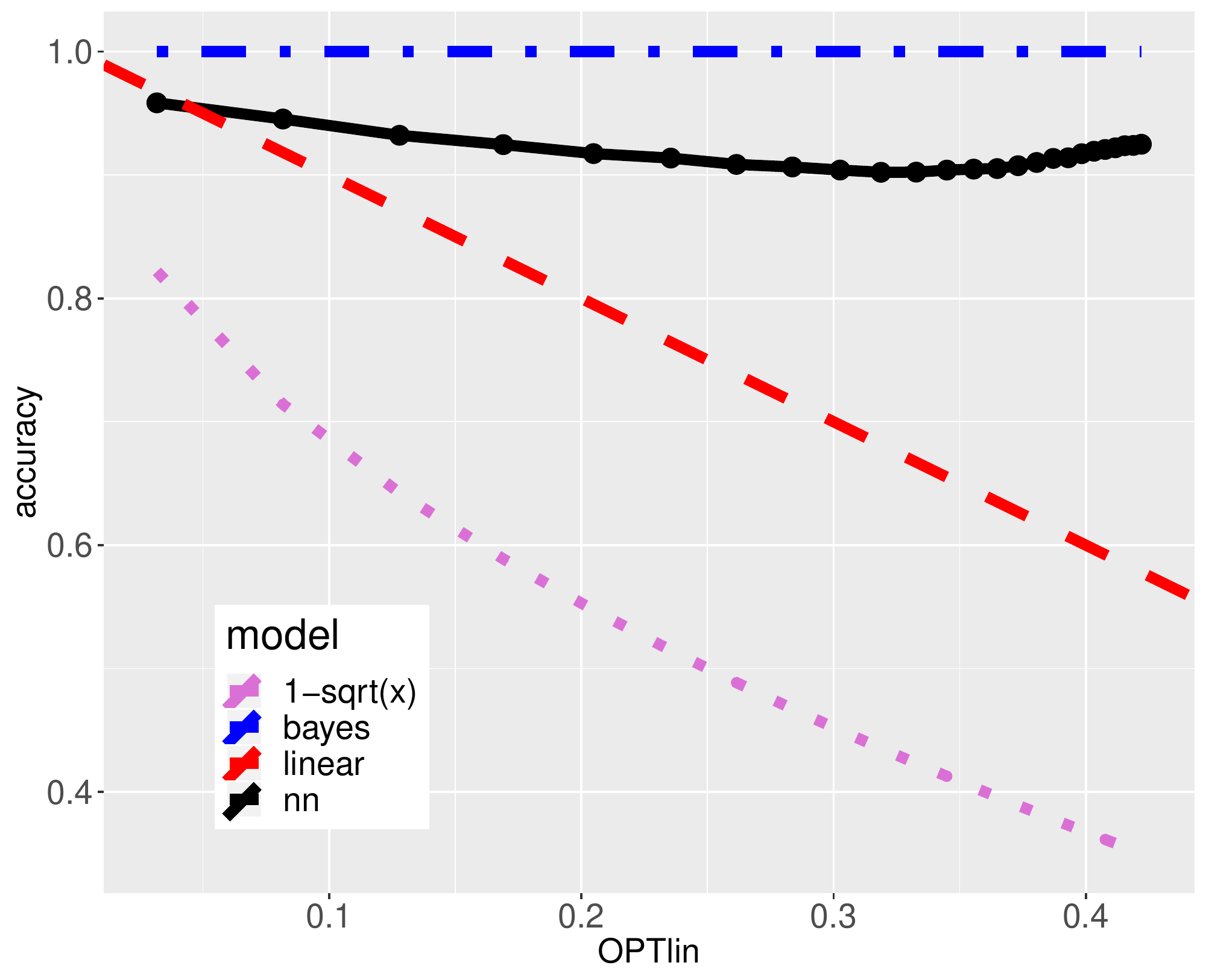}
         \label{fig:abs.results}
     \end{subfigure}
        \caption{(a)  Decision boundary for the same setup as the baseline neural network for data coming from $\tilde \calD_b$ for four random initializations (across columns) and for $\optlin \in  \{ 0.08, 0.26, 0.40\}$ (across rows).  Compare with Figure \ref{fig:baseline.decision.boundary}.  The decision boundaries are noticeably nonlinear.  (b) Test classification accuracy for data coming $\tilde \calD_{b}$.  Corollary \ref{corollary:anti.concentration} guarantees performance of at least $1-\Omega(\sqrt{\optlin})$, but the neural network performs significantly better due to the ability to produce a nonlinear decision boundary.  Note that the variance over ten initializations of the first layer weights are so small that the error bars are not visible.} \label{fig:abs}
\end{figure}


\clearpage
\bibliography{references}

\end{document}